\DeclareMathOperator*{\argmax}{arg\,max}
\newtheorem{lemma}{Lemma}
\newtheorem{definition}{Definition}
\icmltitlerunning{Attribution Mask: Filtering Out Irrelevant Features by Recursively Focusing Attention on Inputs of DNNs}
\begin{document}

\twocolumn[
\icmltitle{Attribution Mask: Filtering Out Irrelevant Features By Recursively Focusing Attention on Inputs of DNNs}




\begin{icmlauthorlist}
\icmlauthor{Jae-Hong Lee}{to}
\icmlauthor{Joon-Hyuk Chang}{to}
\end{icmlauthorlist}

\icmlaffiliation{to}{Department of Electronics and Computer Engineering, Hanyang University, Seoul, Republic of Korea}

\icmlcorrespondingauthor{Jae-Hong Lee}{ljh93ljh@hanyang.ac.kr}

\icmlkeywords{Accountability, Transparency, Interpretability}

\vskip 0.3in
]



\printAffiliationsAndNotice{}  

\begin{abstract}
Attribution methods calculate attributions that visually explain the predictions of deep neural networks (DNNs) by highlighting important parts of the input features.
In particular, gradient-based attribution (GBA) methods are widely used because they can be easily implemented through automatic differentiation.
In this study, we use the attributions that filter out irrelevant parts of the input features and then verify the effectiveness of this approach by measuring the classification accuracy of a pre-trained DNN.
This is achieved by calculating and applying an \textit{attribution mask} to the input features and subsequently introducing the masked features to the DNN, for which the mask is designed to recursively focus attention on the parts of the input related to the target label.
The accuracy is enhanced under a certain condition, i.e., \textit{no implicit bias}, which can be derived based on our theoretical insight into compressing the DNN into a single-layer neural network.
We also provide Gradient\,*\,Sign-of-Input (GxSI) to obtain the attribution mask that further improves the accuracy.
As an example, on CIFAR-10 that is modified using the attribution mask obtained from GxSI, we achieve the accuracy ranging from 99.8\% to 99.9\% without additional training. 
\footnote[2]{All code will be available here: \\ \texttt{http://github.com/j-pong/AttentionMask}}
\end{abstract}

\section{Introduction and Related Works}
\begin{figure}[ht]
	\centering
	\includegraphics[scale=0.475]{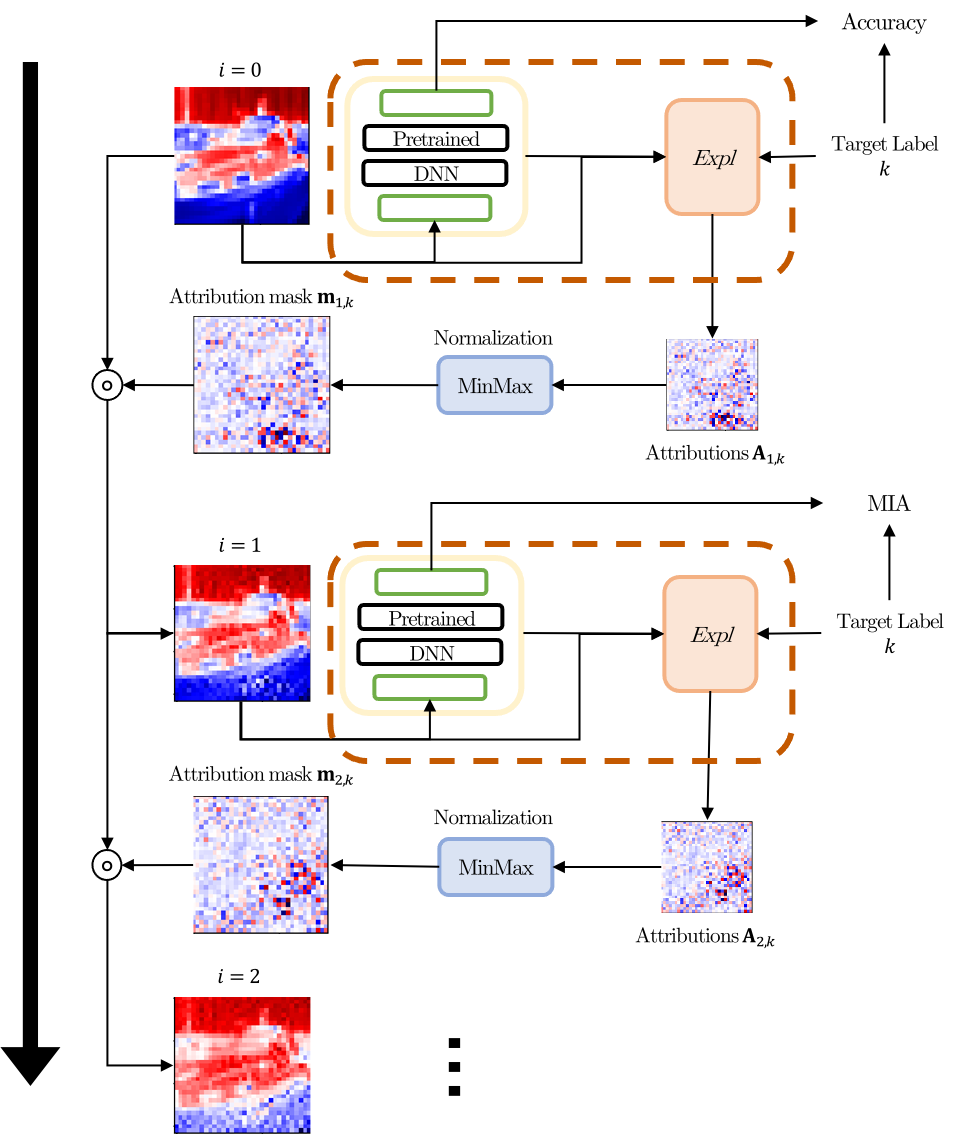}
	\caption{
	Schematic diagram for measuring the masked input accuracy (MIA) of gradient-based attribution (GBA) method, $Expl$. The attributions are transformed to an \textit{attribution mask} via MinMax normalization. The mask is applied to the input features, and the entire process is repeated for $i=1,2,\cdots\,$, as depicted in the figure. See Algorithm \ref{algo:measure} for the full description.
	}
\end{figure}

A deep neural network (DNN) is a powerful function that can express various types of functions given only input features and target labels without a specific model design. 
However, explaining the prediction of a DNN is difficult owing to its complex structure. 
Existing studies \cite{ribeiro2016should, lundberg2017unified, kindermans2017learning, sundararajan2017axiomatic, shrikumar2017learning} have approached this problem using an attribution method that visually explains the prediction of DNNs by highlighting important parts of the input features. In particular, gradient-based attribution (GBA) methods are widely used because they can be easily implemented through automatic differentiation.

Initially, \cite{simonyan2014deep} proposed calculating gradients with respect to the input features of a DNN to obtain a saliency map that highlights important parts of an image. 
Subsequently, Guided Backpropagation \cite{springenberg2014striving} obtained attributions that look similar to the target object by forcing the negative components of the gradient obtained from the DNN to zeros. 
Meanwhile, Gradient\,*\,Input method obtained attributions through an element-wise product between the gradients and the input features \cite{ancona2017towards,kindermans2017learning}. 
In a similar context, Integrated Gradients \cite{sundararajan2017axiomatic} integrate the gradients obtained by gradually changing the input features to the baselines, for which the integrated gradients are multiplied to the difference between the input features and the baselines to calculate the attributions.
In particular, this method satisfies two axioms: sensitivity and implementation invariance. 
However, it has a drawback of requiring numerous computations. 
From a different perspective, Learning Important Features Through Propagating Activation Differences \cite{shrikumar2017learning} introduced DeepLIFT, which has a computational advantage while violating implementation invariance of the aforementioned axioms. 
In addition, DeepSHAP combined DeepLIFT and Shapley values \cite{shapley1953value}, which was proposed by the SHAP \cite{lundberg2017unified}.

These methods were mainly evaluated by judging attributions as visualization.
Alternatively, there are attempts to measure the accuracy of a DNN after applying attributions to input features \cite{hooker2018benchmark, kim2019saliency, chalasani2020concise, shi2020informative, phang2020investigating}. Especially, the ROAR score \cite{hooker2018benchmark} is calculated by retraining the model from scratch through masking relevant image parts to zero. 
Meanwhile, \cite{adebayo2018sanity} proposed a sanity check, 
where the ``randomized" attributions are computed using a DNN, whose parameters of certain layers are randomly assigned, and subsequently compared to the ``original" attributions through a Structural Similarity Index Measure (SSIM).
Moreover, with this sanity check, \cite{sixt2020explanations} found that attributions of GBA methods, except for DeepLIFT, are independent of the parameters of the later layers. 
They also explained how the information of the layer is lost through cosine similarity convergence (CSC).
These measures are used to evaluate the sensitivity of an attribution method for the later layers of a DNN. 
However, these measures require human judgment to determine whether a test is passed and additional metrics for images.
Instead, we focus on \textit{how much the attributions obtained from GBA methods can improve the classification accuracy of a DNN without retraining}.

Before proposing our approach, we theoretically provide a \textit {no implicit bias} condition that is necessary to improve the accuracy. This condition is derived by compressing a DNN into a single-layer neural network and separating the noise terms.
From the perspective of compressing a DNN, we reinterpret the gradient computed with GBA methods and show the condition of the gradient toward reducing the noise of input features.

Our approach transforms the attributions obtained from a pre-trained DNN into an \textit{attribution mask}, multiply it to the input features, and subsequently introduce these masked features as an input to the DNN.
This mask is designed to recursively focus attention on the parts of the input related to the target label. 
To evaluate this approach, we measure the \textit{masked input accuracy} (MIA), which denotes the accuracy obtained by feeding the masked features to the DNN.
Furthermore, we provide Gradient\,*\,Sign-of-Input (GxSI), a new GBA method to achieve better MIA score than existing methods. 

We measure the MIA of the GBA methods on CIFAR-10 and CIFAR-100. In our experiments, we show the followings:
\begin{itemize}
    \item GBA methods of calculating the gradient that satisfy the \textit{no implicit bias} condition achieve a higher MIA scores compared to other methods.
    \item 
    While the existing GBA methods fail to maintain or increase the MIA score as the number of \textit{iterative attribution masking} increases, the proposed GxSI method does not suffer from this problem.
    For example, on CIFAR-10, we achieve the MIA 99.8\%$\sim$99.9\% by masking each image in both the training and test set.
    \item By employing the masked features, obtained using GxSI and the attribution mask, to train and test another DNN, we achieve the image classification accuracy ranging from 99.7\% to 100.0\% on CIFAR-10.
\end{itemize}
These results imply that our approach effectively filters out parts of the input features not related to the target label.

\section{Compression of DNNs and No Implicit Bias}
Before proposing our approach, we reinterpret the gradient used by the existing GBA methods. 
The nonlinearity is locally linearized and the DNN is compressed into a single-layer neural network (Section \ref{subsec:compressing}). 
To remove the noise in the compressed DNN, we constrain the model to a certain condition (Section \ref{subsec:bf}).
In Section \ref{subsec:ag}, we show that the noise of gradient can be reduced under this condition.
\subsection{Linearization and compression of DNNs}
\label{subsec:compressing}
A single-layer neural network is an easy-to-interpret model because it simplifies the relationship between the target class and the input features to an affine transform. Therefore, to create an interpretable DNN, we compress a DNN into a single-layer neural network through local linearization. 
Suppose that a pre-trained DNN to be compressed has a total of $L$ layers, and the number of dimensions of the $l$-th layer is $d_l$, where $l\in\{1,\dots,L\}$. 
Let $\mathbf{z}_l=\phi(\mathbf{s}_l)$ be a hidden output of each layer. Here, the input of nonlinearity $\phi(\cdot)$ is $\mathbf{s}_l=\mathbf{W}_l\mathbf{z}_{l-1}+\mathbf{b}_l$, the weight and the bias are $\mathbf{W}_l\in R^{d_{l}, d_{l-1}}$ and $\mathbf{b}_l\in R^{d_l}$, and the input features are $\mathbf{z}_0\in R^{d_0}$. 
\begin{figure}[]
	\centering
	\includegraphics[scale=0.31]{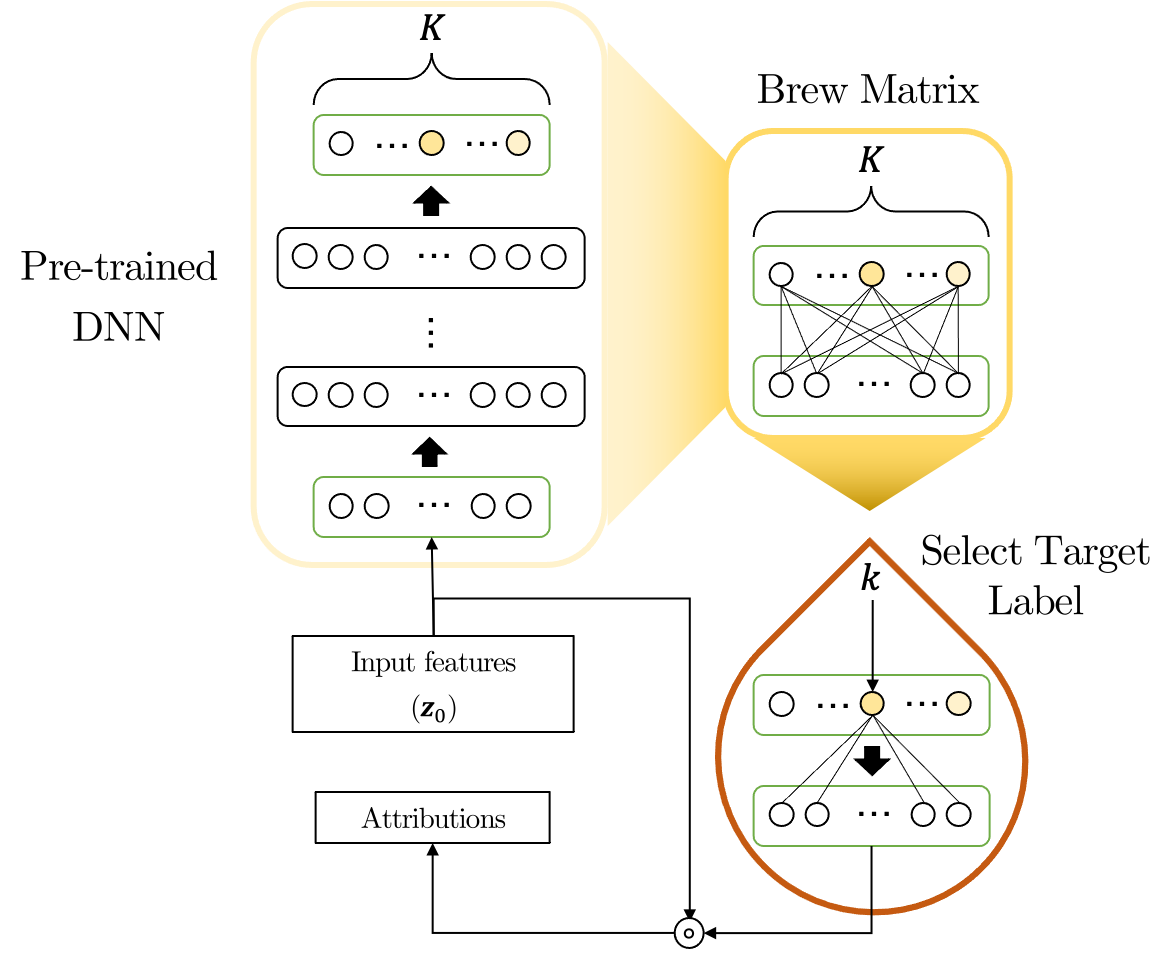}
	\caption{
	A DNN can be compressed into a single-layer neural network through local linearization, whose weight matrix is called a \textit{brew matrix}. The row vector at the target label position $k$ of the brew matrix is used to obtain the attributions with respect to the given input feature.
	}
\end{figure}

To locally linearize a DNN, we focus on the nonlinearity of the DNN. The nonlinearity can be classified into two categories, element-wise and non-element-wise nonlinearity. The element-wise nonlinearity (e.g., Rectified Linear Unit (ReLU), Tanh, etc.) denotes a function that does not change the dimensions of the input features, and the non-element-wise nonlinearity (e.g., MaxPool, AvgPool, etc.) means that the input dimension changes. 
First, applying the Taylor decomposition \cite{kindermans2016investigating, montavon2017explaining} to the element-wise nonlinearity function $\phi: R^{d_l}\to R^{d_l}$ with respect to an input of the function $\mathbf{s}_l\in R^{d_l}$, we obtain
\begin{equation}
    \label{equ:nonelenonlinear_wise}
    \phi(\mathbf{s}_l)=\mathbf{a}_l\circ\mathbf{s}_l+\mathbf{c}_l,
\end{equation}
where $\mathbf{a}_l=\nabla_{\mathbf{s}_l}\phi(\mathbf{s}_l)$ and $\mathbf{c}_l=\phi(\mathbf{e}_l)-\mathbf{a}_l\circ\mathbf{e}_l+(\nabla^2_{\mathbf{s}_l}\phi(\mathbf{s}_l)\circ(\mathbf{s}_l-\mathbf{e}_l)^2+\dots)$. The baseline is $\mathbf{e}_l\in R^{d_l}$ and $\circ$ is an element-wise product.
Next, for a non-element-wise nonlinearity $\phi: R^{d_l}\to R^{d_{l+1}}$, the Taylor decomposition provides
\begin{equation}
    \label{equ:nonelenonlinear}
    \phi(\mathbf{s}_l)=\mathbf{W}_{l+1}\mathbf{s}_l,
\end{equation}
where $\mathbf{W}_{l+1}\in R^{d_{l+1}, d_{l}}$. Because there is no bias in the result of the local linearization, we can absorb the function in another linear layer. Therefore, we only need to consider the element-wise nonlinearity containing bias and apply the linearized nonlinearity to the DNN $f: R^{d_0}\to R^{d_L}$ including CNNs \cite{nie2018theoretical}. We then have
\begin{equation}
    \label{equ:lineardnn}
    \begin{split}
    f(\mathbf{z}_0)= \mathbf{a}_L\circ(\mathbf{W}_L(\dots(\mathbf{a}_1\circ(\mathbf{W}_1\mathbf{z}_0+\mathbf{b}_1)+\mathbf{c}_1) \\
    +\dots)+\mathbf{b}_L)+\mathbf{c}_L.
    \end{split}
\end{equation}

\begin{lemma} [] 
\label{theo:associative}
Let $\mathbf{W}\in R^{d',d}$, $\mathbf{x}\in R^d$ and $\mathbf{a}\in R^{d'}$. Then,
    \[
        \mathbf{a}\circ(\mathbf{W}\mathbf{x})=(\bar{\mathbf{a}}\mathbf{W})\mathbf{x},
    \]
where $\bar{\mathbf{a}}=\text{diag}(\mathbf{a})$.
\end{lemma}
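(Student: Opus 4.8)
The plan is to reduce this Hadamard-product identity to the associativity of ordinary matrix multiplication. First I would record the elementary fact that for any vector $\mathbf{v}\in R^{d'}$ one has $\mathbf{a}\circ\mathbf{v}=\bar{\mathbf{a}}\mathbf{v}$; that is, left-multiplication by the diagonal matrix $\bar{\mathbf{a}}=\text{diag}(\mathbf{a})$ realizes the element-wise product with $\mathbf{a}$. This is immediate coordinate-wise, since $(\bar{\mathbf{a}}\mathbf{v})_i=\sum_k \bar{a}_{ik}v_k=\sum_k a_i\delta_{ik}v_k=a_iv_i=(\mathbf{a}\circ\mathbf{v})_i$. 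Applying this with $\mathbf{v}=\mathbf{W}\mathbf{x}$ gives $\mathbf{a}\circ(\mathbf{W}\mathbf{x})=\bar{\mathbf{a}}(\mathbf{W}\mathbf{x})$, and then associativity of matrix multiplication yields $\bar{\mathbf{a}}(\mathbf{W}\mathbf{x})=(\bar{\mathbf{a}}\mathbf{W})\mathbf{x}$, which is the claim. (Dimensions are compatible: $\bar{\mathbf{a}}\in R^{d',d'}$, so $\bar{\mathbf{a}}\mathbf{W}\in R^{d',d}$ and the product with $\mathbf{x}\in R^d$ is well defined.)

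Alternatively, and equivalently, I could verify the identity directly in coordinates. The $i$-th entry of the left-hand side is $a_i(\mathbf{W}\mathbf{x})_i=a_i\sum_{j=1}^{d}W_{ij}x_j$, while the $i$-th entry of the right-hand side is $\sum_{j=1}^{d}(\bar{\mathbf{a}}\mathbf{W})_{ij}x_j=\sum_{j=1}^{d}a_iW_{ij}x_j$, because the $(i,j)$ entry of $\bar{\mathbf{a}}\mathbf{W}$ is $\sum_k a_i\delta_{ik}W_{kj}=a_iW_{ij}$. The two expressions coincide for every $i$, so the vectors are equal.

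There is essentially no obstacle here; the one point to keep straight is the index bookkeeping, namely that the diagonal matrix acts on the $d'$-dimensional \emph{output} index of $\mathbf{W}$, not on the $d$-dimensional input index. I would present the diagonal-matrix argument as the main proof, since it makes transparent that the statement is nothing more than associativity in disguise, and this is exactly what licenses repeatedly absorbing each layer's linearized activation $\mathbf{a}_l$ into the adjacent weight matrix in \eqref{equ:lineardnn} so as to collapse the locally linearized network into a single affine map.
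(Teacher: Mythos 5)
Your proof is correct and follows essentially the same route as the paper's: rewrite the Hadamard product with $\mathbf{a}$ as left-multiplication by $\mathrm{diag}(\mathbf{a})$ and then invoke associativity of matrix multiplication. The coordinate-level verification and the dimension check are additional detail the paper omits, but the core argument is identical.
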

\begin{proof}
\[
    \mathbf{a}\circ(\mathbf{W}\mathbf{x})=\text{diag}(\mathbf{a})(\mathbf{W}\mathbf{x}) \\
    = (\text{diag}(\mathbf{a})\mathbf{W})\mathbf{x}.
\]
\end{proof}
Lemma \ref{theo:associative} is applied to Equation \ref{equ:lineardnn} of the locally linearized DNN for the input features and is derived as follows:
\begin{equation}
    \label{equ:shotlineardnn}
    f(\mathbf{z}_0)=\mathbf{W}_t\mathbf{z}_0+\mathbf{\Delta}_t,
\end{equation}
where
\begin{equation}
    \label{equ:shotlineadnnmatrix}
    \mathbf{W}_t={\prod_{l=1}^{L}(\bar{\mathbf{a}}_l\mathbf{W}_l)},
\end{equation}
\begin{dmath}
\label{equ:addbias}
    \mathbf{\Delta}_t=\sum_{l=1}^{L-1}{\prod_{m=l+1}^{L}(\bar{\mathbf{a}}_m\mathbf{W}_m)(\mathbf{a}_l\circ\mathbf{b}_l+\mathbf{c}_l)}+\mathbf{a}_L\circ\mathbf{b}_L+\mathbf{c}_L.    
\end{dmath}
Assuming there is $\mathbf{\Delta}^*_t\in R^{d_0}$ that satisfies $\mathbf{\Delta}_t=\mathbf{W}_t\mathbf{\Delta}^*_t$ for compressing the DNN into a weight matrix, Equation \ref{equ:shotlineardnn} is derived as follows:
\begin{equation}
    \label{equ:brewmatrix}
    f(\mathbf{z}_0)=\mathbf{W}_t(\mathbf{z}_0+\mathbf{\Delta}_t^*).
\end{equation}
Thus, the brew matrix $\mathbf{W}_t$ and the additive input feature $\mathbf{\Delta}_t^*$ can be obtained for the input features $\mathbf{z}_0$. 
\subsection{No implicit bias model for noise reduction}
\label{subsec:bf}
The additive input features $\mathbf{\Delta}_t^*$ is generated by the total bias $\mathbf{\Delta}_t$, which can work as noise for the input features $\mathbf{z}_0$ and is consistent with what was claimed in \cite{mohan2019robust, wang2019bias}. To remove $\mathbf{\Delta}_t^*$, we use a \textit{no implicit bias} (NIB) model, a condition that forces all $\mathbf{c_l}$ and $\mathbf{b_l}$ in Equation \ref{equ:addbias} to zeros. 

First, to force $\mathbf{b}_l=\mathbf{0}$, we eliminate the biases of all layers as well as the mean parameters of batch normalization \cite{ioffe2015batch}. 
Next, we use the Taylor decomposition under the baselines $\mathbf{e}_l=\mathbf{0}$ to find the nonlinearity such that it satisfies the condition $\mathbf{c}_l=\mathbf{0}$. 
Therefore, we only consider functions such that satisfy this condition (e.g., ReLU or MaxPool).
For example, in the case of ReLU, $\phi(\mathbf{0})=\mathbf{0}$ and $\nabla^{(n)}_{\mathbf{s}_l}\phi(\mathbf{s}_l)=\mathbf{0}$ for $n\geq 2$, and thus Equation \ref{equ:nonelenonlinear_wise} is calculated as $\mathbf{a}_l\circ\mathbf{s}_l$.
In the other case, as in Equation \ref{equ:nonelenonlinear}, MaxPool satisfies the condition because the function is locally linearized as a linear layer without bias.  
These two conditions, $\mathbf{b}_l=\mathbf{0}$ and $\mathbf{c}_l=\mathbf{0}$, lead $\mathbf{\Delta}_t$ to $\mathbf{0}$ in Equation \ref{equ:addbias}. We thus can derive Equation \ref{equ:brewmatrix} as follows: 
\begin{equation}
    f(\mathbf{z}_0)=\mathbf{W}_t\mathbf{z}_0.    
\end{equation}
Consequently, $\mathbf{W}_t\in R^{d_0,d_L}$ denotes the noise-reduced brew matrix. When using a DNN as a classifier, which means that $d_L$ is the same as the number of classes $K$, the simplified matrix fully contains the information regarding the target class $k\in\{1,2,\dots,K\}$. Therefore, because the matrix is calculated using only the input features excluding $\mathbf{\Delta}_t^*$, we simply use the $k$-th row of the matrix to obtain information about the target label.

In terms of the brew matrix, attributions $[\mathbf{W}_t]_{k}\circ\mathbf{z}_0$ satisfy the efficiency properties of Shapley when the attributions are defined as Shapley values
\cite{sundararajan2020many}. In other words, the sum of all elements is equivalent to the prediction for the $k$-th class of the DNN.
\subsection{No implicit bias condition}
\label{subsec:ag}
Existing GBA methods calculate attributions along with input features by modifying the gradient obtained from a DNN.
Among such methods, Gradient\,*\,Input, which is the basis, obtains attributions by applying the gradient without additional manipulation. Given the $L$-layer DNN $f$ and the input features $\mathbf{z}_0$, the gradient of this method is calculated through an automatic differentiation as follows:
\begin{dmath}
    \label{equ:aglienardnnn}
    \begin{split}
    \nabla_{\mathbf{z}_0}f(\mathbf{z}_0)=\prod_{l=1}^{L}
    (\nabla_{\mathbf{s}_{l}}\mathbf{z}_{l}\nabla_{\mathbf{z}_{l-1}}\mathbf{s}_{l}).
    \end{split}
\end{dmath}
Here, $\nabla_{\mathbf{s}_{l}}\mathbf{z}_{l}$ is a locally linearized nonlinearity $\bar{\mathbf{a}}$ and $\nabla_{\mathbf{z}_{l-1}}\mathbf{s}_l$ is a weight matrix $\mathbf{W}_l$. 
Therefore, Equation \ref{equ:aglienardnnn} is identical to Equation \ref{equ:shotlineadnnmatrix} under two conditions, $\mathbf{c}_l=0$ and $\mathbf{b}_l=0$. Thus, the brew matrix can also be computed as follows:
\begin{equation}
    \mathbf{W}_t=\nabla_{\mathbf{z}_0}f(\mathbf{z}_0).    
\end{equation}
Based on this result, the gradient calculated using the GBA methods ignores the additive noise term, which means that this noise is always included in the attributions when the NIB model is not used.
\begin{definition}
Given a DNN $f:R^{d_0}\to R^{d_L}$, the input features are $\mathbf{z}_0$ and function $g:R^{d_L}\to R^{d_L,d_0}$ modifies the gradient of this model. For the baselines $\mathbf{e}_0$ that satisfy $f(\mathbf{e}_0)=0$, if
        $f(\mathbf{z}_0)=g(f, \mathbf{z}_0)(\mathbf{z}_0-\mathbf{e}_0)$,
we state that GBA method satisfies the NIB condition.
\end{definition}
The NIB condition guarantees that the gradient does not contain additive noise. DeepLIFT and Integrated Gradients, which are designed for \textit{Completeness} \cite{sundararajan2017axiomatic}, satisfy this condition by introducing the baselines, although finding such baselines is a difficult problem. Therefore, we compute the gradient under the NIB model to enhance the model accuracy through attributions. Under this model, Gradient\,*\,Input, DeepLIFT and Integrated Gradients satisfy the NIB condition (see Appendix A).

\section{Attribution Mask and Proposed Attribution Method}
\label{subsec:proposed}
We now describe our approach for measuring and improving the accuracy of DNNs with attributions. To achieve this, we propose an attribution mask that is recursively calculated from the GBA method (Section \ref{subsec:attrm}). We also provide a new attribution method for improving the performance (Section \ref{subsec:gsi}).
\subsection{Attribution mask and the masked input accuracy (MIA)}
\label{subsec:attrm}
The gradient, which is denoted as a specific row of the brew matrix, is attributed to the class $k$. Each row is trained using several input features corresponding to each class within the dataset. 
Because these features contain some redundant information other than that related to the target class, we suppress this information by focusing attention on important parts of input features through the attributions.
\begin{algorithm}
    \caption{Method for obtaining the attribution mask}
    \label{algo:measure}
    \begin{algorithmic} 
        \STATE \textbf{Input}: input features $\mathbf{z}_0$ 
        \STATE \textbf{Target}: target class label $k$ \\
        \STATE \textbf{Model}: pre-trained DNN model $f$ \\
        \STATE \textbf{Initialization}: initial attribution mask $\mathbf{m}^{*}_{0,k}\leftarrow\mathbf{1}$ \\ 
        \FOR{$i=1,\dots,I$}
            \STATE $\mathbf{A}_{i,k}= \text{Expl}(f,\mathbf{z}_0\circ\mathbf{m}^*_{i-1, k}, k)$ \\
            \STATE $\mathbf{m}_{i,k} = \text{MinMax}(\mathbf{A}_{i,k}$) \\
            \STATE $\mathbf{m}^*_{i,k}\leftarrow\mathbf{m}^*_{i,k}\circ\mathbf{m}_{i,k}$ \\
        \ENDFOR
        \STATE \textbf{Output}: $\mathbf{m}^*_{i,k}$
    \end{algorithmic}
\end{algorithm}

First, we propose an attribution mask that recursively focuses attention on parts of an input of a pre-trained DNN $f$. Let attributions $\mathbf{A}_{i,k}=\text{Expl}(f,\mathbf{z}_0\circ\prod_{j=0}^{i-1}\mathbf{m}_{j,k}, k)$ with $\text{Expl}$ denotes the GBA method, where the attribution mask for the target class is $k$ and the number of iterations is $i$.
The mask is calculated using $\text{MinMax}$ normalization \cite{patro2015normalization} as follows:
\begin{equation}
    \mathbf{m}_{i,k}=\dfrac{\mathbf{A}_{i,k}-\min(\mathbf{A}_{i,k})}{\max(\mathbf{A}_{i,k})-\min(\mathbf{A}_{i,k})},
\end{equation}
where the initial mask $\mathbf{m}_{0,k}$ is $\mathbf{1}$, and $\mathbf{m}_{i,k}, \mathbf{A}_i\in R^{d_0}$. This method normalizes the mask value to between zero and 1, leaving only the ratio of the relative values between the attributions, and thus the repeated multiplications of the mask does not diverge.
In the case of an image, which is the target data of this study, $d_0$ is $channel \times height \times width$, and we conduct normalization for each $channel$. 

Next, the attribution mask is multiplied by the input features and then the masked features are fed into the DNN. 
Let $\mathcal{D}$ be a test dataset that has the input features $\mathbf{z}_0$ paired with the target label $t\in\{1,2,\dots,K\}$.
Then, the accuracy of the DNN with the mask is calculated as follows:
\begin{equation}
    Q = \dfrac{1}{|\mathcal{D}|}\sum_{\{\mathbf{z}_0,t\}\in\mathcal{D}}\mathbbm{1}_{t}[\argmax_{k}f(\mathbf{z}_0\circ\prod_{i}^{}\mathbf{m}_{i,t})],
\end{equation}
where $\mathbbm{1}_{t}[.]$ is an indicator function, which is $1$ when $t=k$ and has a value of $0$ for the others. 
This accuracy, i.e., the MIA, is used to quantify the loss of information that occurs when target label information passes through DNN layers by the backpropagation process.
For example, if $Q$ decreases from the baseline accuracy ($i=0$) for $i>0$, the target label information is not properly backpropagated to the attributions, indicating that the irrelevant part is incorrectly removed.

\subsection{Gradient\,*\,Sign-of-Input (GxSI)}
\label{subsec:gsi}
The existing GBA methods use the gradient that already has the physical dimension of the input features and create nonlinear attributions under the perspective of the brew matrix. To make linear attributions, we only use the sign of the input features. 
Next, we propose the attribution method, which is calculated as follows:
\begin{equation}
    \label{equ:gsi}
    \mathbf{A}_{i,k}=\text{sign}(\mathbf{z}_0)\circ([\nabla_{\mathbf{z}_0}f(\mathbf{z}_0)]_{k}),
\end{equation}
We label this method Gradient\,*\,Sign-of-Input (GxSI), which increases the attribution value when the gradient element and the input feature have the same sign. 
This method is based on the intuition that the input features that improve the prediction of the DNN are an important part.
Unlike the Gradient\,*\,Input, this method is designed so that the gradient value affects attributions by only using the sign of the input instead of its value.

\section{Experiment Setup}
We tested our approach on two image datasets CIFAR-10 and CIFAR-100. For these datasets, the mean-variance normalization was used. This normalization was conducted for each R, G and B channel, the means of which were 0.507, 0.487, and 0.441, and the standard deviations were 0.267, 0.256, and 0.276, respectively. 
We trained the DNN based on PyTorch \cite{paszke2019pytorch} and used the open-source Captum \cite{kokhlikyan2020captum} based on PyTorch to test the existing GBA methods.
To improve the performance of the DNN, we used a random crop and random horizontal flip on the images during training. Weights are randomly initialized prior to training.
The optimizer used for training the DNN applied stochastic gradient descendent (SGD), and the momentum was 0.9 and the weight decay is 0.0005. 
By allocating 88 batches for each of the 4 GPUs, the total batch size was 352. 
The learning rate starts with 0.1 and uses a learning rate decay method that reduces the learning rate in a specific epoch. 
The specific epoch depends on the dataset, and for CIFAR-10, the learning rate was multiplied by 0.1 in 60 out of a total of 70 epochs. For CIFAR-100, the learning rate was multiplied by 0.1 in 100 and 150 out of a total of 175 epochs.

\subsection{Model}
\label{subsec:setup}
\textbf{NIB VGG}
To construct the NIB model, we removed all parameters corresponding to the mean of batch normalization in VGG \cite{simonyan2014very} and the additive bias of all layers. In this model, a kernel size of 3 and a padding of 1 were used for all conv2d layers, and a kernel size of 2 and a stride size of 2 were used for MaxPool.
To experiment with NIB models of different depths, we started with NIB VGG13 and constructed the model by reducing the number of conv2d layer by 2 to a total of 7 layers (see Table \ref{tab:spec}). Their models are labeled NIB VGG13-2, NIB VGG13-4, and NIB VGG13-6, respectively.
\begin{table}[]
    \caption{Configurations of NIB VGGs. For example, ``layer-512" means the layer has 512 output nodes.}
    \label{tab:spec}
    \centering
    \resizebox{0.8\columnwidth}{!}{
        \begin{tabular}{|c|c|c|c|c|}
        \hline
        NIB VGG13-6 & NIB VGG13-4  & NIB VGG13-2       & NIB VGG13                                                                          & NIB VGG16                                                                                  \\ \hline\hline
        \multicolumn{5}{|c|}{\begin{tabular}[c]{@{}c@{}}conv2d-64\\ conv2d-64\\ maxpool\end{tabular}}                                                                                                                \\ \hline
        \multicolumn{5}{|c|}{\begin{tabular}[c]{@{}c@{}}conv2d-128\\ conv2d-128\\ maxpool\end{tabular}}                                                                                                              \\ \hline
        -       & \multicolumn{3}{c|}{\begin{tabular}[c]{@{}c@{}}conv2d-256\\ conv2d-256\\ maxpool\end{tabular}}            & \begin{tabular}[c]{@{}c@{}}conv2d-256\\ conv2d-256\\ conv2d-256\\ maxpool\end{tabular} \\ \hline
        -       & -        & \multicolumn{2}{c|}{\begin{tabular}[c]{@{}c@{}}conv2d-512\\ conv2d-512\\ maxpool\end{tabular}} & \begin{tabular}[c]{@{}c@{}}conv2d-512\\ conv2d-512\\ conv2d-512\\ maxpool\end{tabular} \\ \hline
        -       & -        & -             & \begin{tabular}[c]{@{}c@{}}conv2d-512\\ conv2d-512\\ maxpool\end{tabular}      & \begin{tabular}[c]{@{}c@{}}conv2d-512\\ conv2d-512\\ conv2d-512\\ maxpool\end{tabular} \\ \hline
        \multicolumn{5}{|c|}{FC-4096}                                                                                                                                                                                \\ \hline
        \multicolumn{5}{|c|}{FC-4096}                                                                                                                                                                                \\ \hline
        \multicolumn{5}{|c|}{FC-10 or FC-100}                                                                                                                                                                                  \\ \hline
        \end{tabular}
    }
\end{table}

\textbf{WideResNet}
WideResNet \cite{zagoruyko2016wide} was only used as a student model in Section \ref{subsec:generalmask}, and thus it was not modified as an NIB model. We used WRN-28-10 and WRN-40-4, both of which achieve a stable performance on CIFAR datasets.

\subsection{Attribution methods}
\label{subsec:attmethodsettings}
We targeted the following GBA methods: Gradient\,*\,Input (GxI),  Guide Backpropagation (GBP), Integrated Gradient (IG), DeepLIFT (DL), Positive-Gradient\,*\,Input (PGxI) and Gradient\,x\,Sign-of-Input (GxSI). 
Our implementations of GBA method are identical to those of \cite{adebayo2018sanity}, except for PGxI and GxSI.
First, in the case of the proposed attribution method, GxSI, we implemented Equation \ref{equ:gsi}.
Next, in the case of PGxI, we implemented the saliency map of \cite{simonyan2013deep} by multiplying the input by the following element: $[\nabla_{\mathbf{z}_0}f(\mathbf{z}_0)]_{k}^{+}\circ\mathbf{z}_0$, where $k$ is a target class and $[\cdot]^{+}=\max(\cdot, 0)$.
Among the other methods, IG and DL need the baselines, and because we use the NIB model, the attributions were calculated with the baselines set to zeros. In particular, in the case of IG, we calculated the attributions with a step size of 50 to numerically calculate the integral.

\section{Evaluation and Discussion}
\begin{figure}
	\includegraphics[scale=0.5]{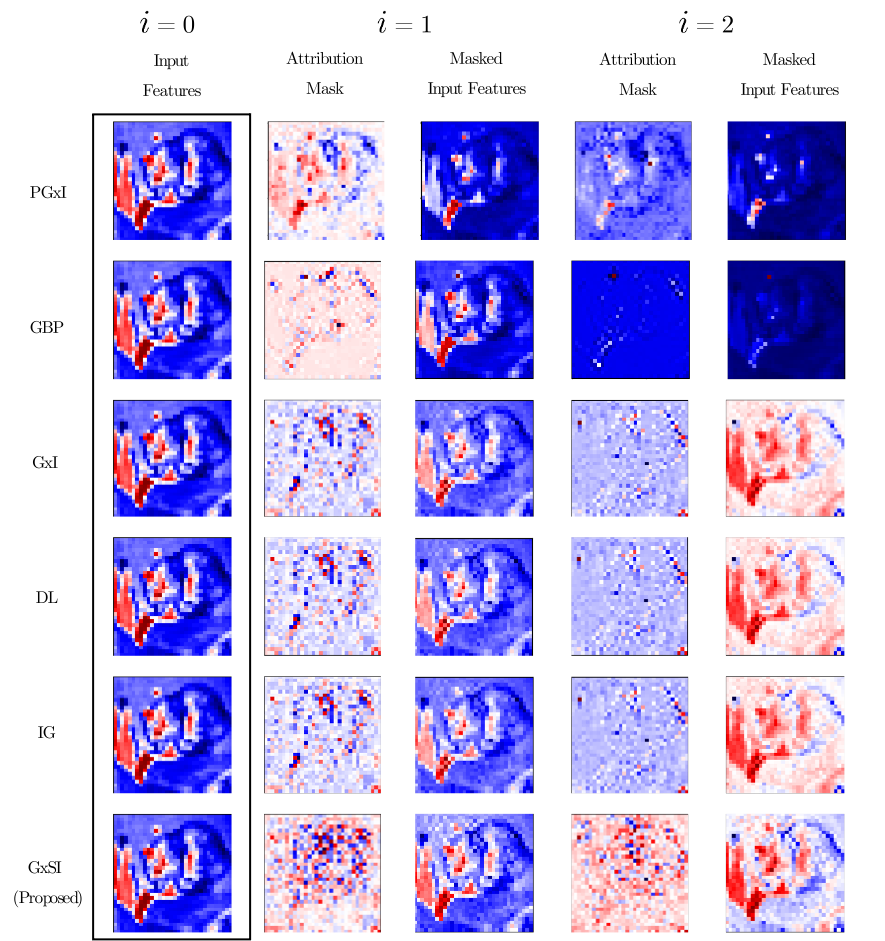}
	\caption{Comparison of different attribution methods. Input features are fed to the DNN without the attribution mask at iteration $i=0$. After calculating the mask, it is multiplied to the input features as the iterations increase. Red color denotes positive and blue color is negative relevance.}
	\label{fig:analysis}
\end{figure}
We compare the performance of the GBA methods through the MIA (Section \ref{subsec:analysis}). 
We test the effectiveness of GxSI (Section \ref{subsec:analysis_gsi} and Section \ref{subsec:interprep_model}).
We modify a dataset with the attribution masks obtained by GxSI and subsequently use this dataset to train another DNN model (Section \ref{subsec:generalmask}).
Some ablation studies are provided in Section \ref{subsec:ablestudy}.

\subsection{MIA score of GBA methods}
\label{subsec:analysis}
To verify the effect of the NIB condition on the MIA score, we measured this score of the existing GBA methods. Whether the methods satisfy the condition or not depends on the gradient modification and a pre-trained model. We used NIB VGG16 among the pre-trained NIB models to demonstrate the effect of the gradient modification (see the details in Section \ref{subsec:setup}). 

As shown in Table \ref{tab:interpret_iteration}, GxI, DL, and IG have the same MIA value under the NIB model. We also see that these methods produce better performance than other methods. 
In particular, on CIFAR-10, the performance was 96.3\%, which is close to 100\%. 
However, as shown in Figure \ref{fig:analysis}, PGxI and GBP methods reduce the performance despite the attribution mask looking more similar to the input image than with the other approaches. 
This deterioration in performance is because the object in this image disappears as the mask recursively modifies the input image (Figure \ref{fig:analysis} col. 1, col. 2).
The baseline accuracy (i.e., the accuracy at $i=0$) of the DNN on CIFAR-100 is inferior to that of CIFAR-10. However, we can see that the methods, which satisfy the NIB condition, have a higher performance than the other methods.

According to the above results, methods using gradients that satisfy the NIB condition can increase the MIA compared to the other methods. Meanwhile, these results are contrary to the fact that GxI and other GBA methods failed the sanity check \cite{adebayo2018sanity, sixt2020explanations}, the reason for which is that these methods do not satisfy the NIB condition because the model used in the previous study was not the NIB model.

\subsection{Improvement of the MIA through GxSI}
\label{subsec:analysis_gsi}
\begin{table}[]
    \caption{The MIA $Q$ comparison of each attribution methods for NIB VGG16 on CIFAR-10 and CIFAR-100. $Q\uparrow$ denotes difference $Q$ from the baseline accuracy ($i=0$). Iteration denote number of applying the attribution mask to the input features.}
    \centering
	\resizebox{\columnwidth}{!}{
	    \label{tab:interpret_iteration}
        \begin{tabular}{|c|c|c|c|c|c|c|}
        \hline
        \multirow{3}{*}{Dataset}   & \multirow{3}{*}{Method}                                   & \multicolumn{5}{c|}{Iteration}                                                        \\ \cline{3-7} 
                                   &                                                           & \multirow{2}{*}{$i=0$} & \multicolumn{2}{c|}{$i=1$}    & \multicolumn{2}{c|}{$i=2$}   \\ \cline{4-7} 
                                   &                                                           &                        & $Q$ (\%)      & $Q\uparrow$   & $Q$ (\%)      & $Q\uparrow$  \\ \hline\hline
        \multirow{6}{*}{CIFAR-10}  & PGxI                                                      & \multirow{6}{*}{91.5}  & 80.6          & -10.5         & 64.5          & -26.0        \\ \cline{2-2} \cline{4-7} 
                                   & GBP                                                       &                        & 85.8          & -5.7          & 69.8          & -21.7        \\ \cline{2-2} \cline{4-7} 
                                   & GxI                                                       &                        & 90.2          & -1.3          & 96.3          & 4.8          \\ \cline{2-2} \cline{4-7} 
                                   & DL                                                        &                        & 90.2          & -1.3          & 96.3          & 4.8          \\ \cline{2-2} \cline{4-7} 
                                   & IG                                                        &                        & 90.2          & -1.3          & 96.3          & 4.8          \\ \cline{2-2} \cline{4-7} 
                                   & \begin{tabular}[c]{@{}c@{}}GxSI\\ (Proposed)\end{tabular} &                        & \textbf{94.1} & \textbf{2.6}  & \textbf{98.2} & \textbf{6.7} \\ \hline
        \multirow{6}{*}{CIFAR-100} & PGxI                                                      & \multirow{6}{*}{69.8}  & 50.5          & -19.3         & 30.1          & -39.7        \\ \cline{2-2} \cline{4-7} 
                                   & GBP                                                       &                        & 61.4          & -8.4          & 38.2          & -31.6        \\ \cline{2-2} \cline{4-7} 
                                   & GxI                                                        &                        & 57.5          & -12.3         & 64.1          & -5.7         \\ \cline{2-2} \cline{4-7} 
                                   & DL                                                        &                        & 57.5          & -12.3         & 64.1          & -5.7         \\ \cline{2-2} \cline{4-7} 
                                   & IG                                                        &                        & 57.5          & -12.3         & 64.1          & -5.7         \\ \cline{2-2} \cline{4-7} 
                                   & \begin{tabular}[c]{@{}c@{}}GxSI\\ (Proposed)\end{tabular} &                        & \textbf{66.2} & \textbf{-3.6} & \textbf{75.9} & \textbf{6.1} \\ \hline
        \end{tabular}
    }
\end{table}
\begin{figure*}[ht!]
	\centering
	\includegraphics[scale=0.70]{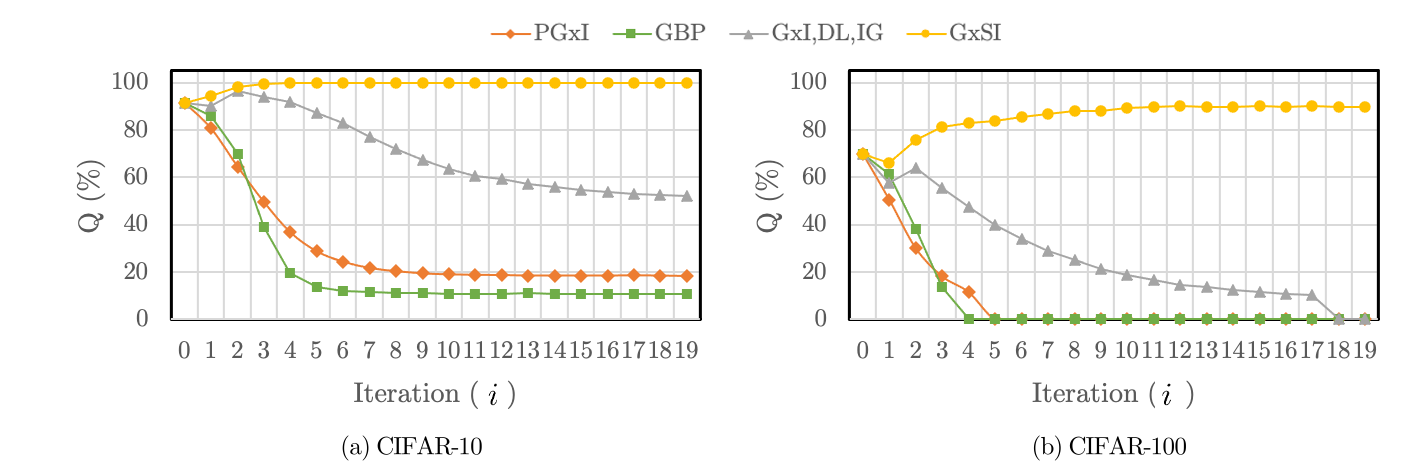}
	\caption{The MIA score for NIB VGG16 model as the iteration increases. (a) The MIA score of each attribution method on (a) CIFAR-10 and (b) CIFAR-100. Note that GxI, DL, and IG share the same legend entry (gray color), as their scores hardly differ from each other.}
	\label{fig:interpret_iteration}
\end{figure*}
To validate that GxSI improves the performance, we compare its MIA score with that of the methods discussed in Section \ref{subsec:analysis}. As we can see in Table \ref{tab:interpret_iteration}, the proposed method achieves superior performance compared to the other attribution methods, and the gradient satisfies the property. This superiority becomes more pronounced when the number of iterations increase to 3 or more.

In Figure \ref{fig:interpret_iteration} (a), we can see that the proposed method, unlike the other approaches, continuously increased its MIA and reached a performance close to the ideal value of 100\%. 
In particular, on CIFAR-10, the performance reached 99.8\% with 7 iterations, 99.9\% with 10 iterations, and then fluctuated between 99.8\% and 99.9\%.
Meanwhile, for GxI, IG, and DL, the measured value increased only until the second iteration and then gradually decreased to 52\%.
Moreover, for PGxI and GBP, the performance decreased continuously to 18.3\% and 10.9\%, respectively.
In Figure \ref{fig:interpret_iteration} (b), on CIFAR-100, we can also see that the MIA value measured using the GxSI increases continuously, and we achieved 90.2\% of the MIA with 12 iterations despite the relatively low baseline accuracy (see Appendix B).

These results imply that the attribution mask of the GxSI can recursively filter information not related to the target label more effectively than existing methods.
\subsection{MIA of GxSI and DNN configuration}
\label{subsec:interprep_model}
\begin{table}[]
    \caption{The MIA of GxSI comparison of each model architecture on CIFAR-10. The specific model structure corresponding to each model name can be checked in Table \ref{tab:spec}. ``\# of layer" denotes the number of convolutional layers and linear layers.}
    \label{tab:MIAofModel}
    \centering
	\resizebox{\columnwidth}{!}{
        \begin{tabular}{|c|c|c|c|c|c|c|}
        \hline
        \multirow{3}{*}{Name} & \multirow{3}{*}{\# of layer} & \multicolumn{5}{c|}{Iteration}                                                       \\ \cline{3-7} 
                              &                              & \multirow{2}{*}{$i=0$} & \multicolumn{2}{c|}{$i=1$}      & \multicolumn{2}{c|}{$i=2$}      \\ \cline{4-7} 
                              &                              &                      & $Q (\%)$      & $Q\uparrow$   & $Q (\%)$      & $Q\uparrow$   \\ \hline\hline
        NIB VGG13-6               & 7                            & 83.3                 & \textbf{98.4} & \textbf{15.1} & \textbf{99.9} & \textbf{16.6} \\ \hline
        NIB VGG13-4               & 9                            & 89.0                 & 97.5          & 8.5           & 99.7          & 10.7          \\ \hline
        NIB VGG13-2               & 11                           & 90.9                 & 97.2          & 6.3           & 99.6          & 8.7           \\ \hline
        NIB VGG13                 & 13                           & 91.4                 & 96.5          & 5.1           & 99.1          & 7.7           \\ \hline
        NIB VGG16                 & 16                           & \textbf{91.5}        & 94.1          & 2.6           & 98.2          & 6.7           \\ \hline
        \end{tabular}
    }
\end{table}
\begin{table}[]
    \caption{Student-teacher model accuracy under different model combination on CIFAR-10. Usage of attribution mask is denoted by `w/' and vice versa. Student accuracy `w/o' mask is the baseline accuracy of each model architecture.}
    \label{tab:general}
    \centering
	\resizebox{\columnwidth}{!}{
        \begin{tabular}{|c|c|c|c|c|}
        \hline
        \multirow{2}{*}{\begin{tabular}[c]{@{}c@{}}Student\\ Model\end{tabular}} & \multirow{2}{*}{\begin{tabular}[c]{@{}c@{}}Student\\ Acc.\\ w/o Teacher\end{tabular}} & \multicolumn{3}{c|}{\begin{tabular}[c]{@{}c@{}}Student\\ Acc.\\ w/ Teacher\end{tabular}}                                                                                                                           \\ \cline{3-5} 
                                                                                 &                                                                                       & \begin{tabular}[c]{@{}c@{}}NIB VGG13-6\\ ($Q$=99.9,\\ $i$=3)\end{tabular} & \begin{tabular}[c]{@{}c@{}}NIB VGG16\\ ($Q$=98.2,\\ $i$=3)\end{tabular} & \begin{tabular}[c]{@{}c@{}}NIB VGG16\\ ($Q$=99.8,\\ $i$=10)\end{tabular} \\ \hline\hline
        NIB VGG13-6                                                                  & 83.3                                                                                  & 99.7                                                                  & 99.1                                                                & 99.5                                                                 \\ \hline
        NIB VGG13-4                                                                  & 89.0                                                                                  & 99.8                                                                  & 99.6                                                                & 99.8                                                                 \\ \hline
        NIB VGG13-2                                                                  & 90.9                                                                                  & 99.9                                                                  & 99.7                                                                & 99.9                                                                 \\ \hline
        NIB VGG13                                                                    & 91.4                                                                                  & 99.9                                                                  & 99.7                                                                & 99.9                                                                 \\ \hline
        NIB VGG16                                                                    & 91.5                                                                                  & 99.9                                                                  & 99.6                                                                & 99.9                                                                 \\ \hline
        WRN-40-4                                                                 & 94.3                                                                                  & 100.0                                                                   & 99.8                                                                & 100.0                                                                  \\ \hline
        WRN-28-10                                                                & 95.1                                                                                  & 100.0                                                                   & 99.8                                                                & 100.0                                                                  \\ \hline
        \end{tabular}
    }
\end{table}
\begin{table*}[ht!]
    \caption{
    Comparison of the MIA scores of model that do not satisfy NIB condition on CIFAR-10. BNB denotes the batch normalization bias and LB denotes the layer bias of the whole model.}
    \label{tab:NIBtest}
    \centering
	\resizebox{0.80\textwidth}{!}{
        \begin{tabular}{|c|c|c|c|c|c|c|c|c|c|c|c|}
        \hline
        \multirow{3}{*}{Model}          & \multirow{3}{*}{Method} & \multicolumn{10}{c|}{Iteration}                                                                                        \\ \cline{3-12} 
                                        &                         & \multirow{2}{*}{$i=0$} & $i=1$         & $i=2$         & $i=3$ & $i=4$ & $i=5$ & $i=6$ & $i=7$         & $i=8$ & $i=9$ \\ \cline{4-12} 
                                        &                         &                        & \multicolumn{9}{c|}{$Q$ (\%)}                                                                 \\ \hline\hline
        \multirow{2}{*}{NIB VGG16 + BNB +LB} & GxI                     & \multirow{2}{*}{92.6}  & 77.2          & \textbf{83.1} & 67.3  & 37.6  & 20.7  & 16.0  & 15.0          & 14.8  & 14.7  \\ \cline{2-2} \cline{4-12} 
                                        & GxSI                    &                        & 78.1          & \textbf{80.0} & 66.5  & 40.0  & 22.0  & 13.5  & 11.2          & 10.6  & 10.5  \\ \hline
        \multirow{2}{*}{NIB VGG16 + BNB}     & GxI                     & \multirow{2}{*}{92.9}  & 75.4          & \textbf{83.0} & 73.6  & 45.9  & 24.3  & 18.3  & 17.0          & 16.7  & 16.6  \\ \cline{2-2} \cline{4-12} 
                                        & GxSI                    &                        & 76.0          & \textbf{80.2} & 71.4  & 47.1  & 25.5  & 15.6  & 12.7          & 11.8  & 11.5  \\ \hline
        \multirow{2}{*}{NIB VGG16 + LB}     & GxI                     & \multirow{2}{*}{92.0}  & 89.4          & \textbf{91.1} & 72.2  & 26.9  & 11.7  & 10.6  & 10.5          & 10.5  & 10.4  \\ \cline{2-2} \cline{4-12} 
                                        & GxSI                    &                        & \textbf{92.2} & 91.4          & 75.8  & 36.8  & 14.4  & 10.3  & 10.1          & 10.0  & 10.0  \\ \hline
        \multirow{2}{*}{NIB VGG16}          & GxI                     & \multirow{2}{*}{91.5}  & 90.2          & \textbf{96.3} & 93.9  & 91.7  & 87.0  & 82.8  & 76.8          & 72.0  & 67.5  \\ \cline{2-2} \cline{4-12} 
                                        & GxSI                    &                        & 94.0          & 98.2          & 99.3  & 99.6  & 99.7  & 99.7  & \textbf{99.8} & 99.8  & 99.8  \\ \hline
        \end{tabular}
    }
\end{table*}
To measure the performance of GxSI according to the configurations of a DNN, we constructed the NIB model of various depths (see Table \ref{tab:spec}).

As shown in Table \ref{tab:MIAofModel}, in general, the depth of the DNN and the baseline accuracy are proportional, but the performance of the proposed method was rather reduced. In particular, at $i=2$, the performance of the proposed method for NIB VGG13-6 was 1.7\% higher than that for NIB VGG16. 
However, as shown in Figure \ref{fig:interpret_iteration}, the performance of GxSI for NIB VGG16 increased to 99.9 \%. 

Therefore, these results indicates that the convergence speed of performance decreases according to the model depth, and the deeper DNNs lead to a difficulty in the attributions required to obtain the information of the target label.
However, we can overcome this problem by applying the attribution mask to the input feature recursively.

\subsection{Easy-to-understand features from the attribution mask}
\label{subsec:generalmask}
We modified the input features with the attribution mask obtained using GxSI under the NIB model and applied it for training other models. 
In this way, we tested for arbitrary DNNs in which the mask transforms the input features into easy-to-understand features. 
For this experiment, the model that obtains the attribution mask is called the teacher model, and the model to be trained from the input features modified with this mask is called the student model.
For this teacher model, NIB VGG13 and NIB VGG16 were used to use the attribution mask showing the high MIA values.
Specifically, for NIB VGG16, to introduce an attribution mask showing various MIA scores, this mask was obtained by repeating the process 3 and 10 times. 
The CIFAR-10 dataset was modified by multiplying the mask to the image of the training and test set.
The student model NIB VGGs and WRNs were trained using this dataset.

As shown in Table \ref{tab:general}, the accuracy of all student models was increased compared to the original accuracy. 
For the student model NIB VGG16, the accuracy was 99.9\%, 99.6\%, and 99.8\% with respect to each teacher model, whereas it was 99.7\%, 99.1\%, and 99.5\% for the student model NIB VGG13-6. We can see that the difference in classification accuracy between the two student models was not large.
For the student model WRN-40-4 and WRN-28-10, through the teacher model with an MIA score of 99.8\% and 99.9\%, 100\% accuracy was achieved when the input features contained all of the target label information.
As we can see in the last two columns of Table \ref{tab:general}, the accuracy of $i=10$ NIB VGG16 was higher on average than $i=3$ NIB VGG16. 

These results indicate that the MIA score is a value that generally represents the ability to filter information other than the target label of the attribution mask from the input features. 
If the mask is only effective for the DNN from which it was obtained, the masked input features should depends on the model and lose the generality.
Thus, the masked input features, which allow to achieve a high accuracy, are generally easy-to-understand features for an arbitrary DNN.
Furthermore, we can see that the configuration of the model does not need to be complicated when the arbitrary dataset is masked with the attribution mask.

\subsection{Ablation study}
\label{subsec:ablestudy}
In the previous Sections, we conducted experiments using the NIB model and saw that the NIB condition is necessary for improving the MIA score. 
We revoked this condition and then conducted an experiment to measure the change in the MIA score on CIFAR-10. 
Accordingly, layer bias (LB) and batch normalization bias (BNB) were added to NIB VGG16 for GxI and GxSI among GBA methods, and we measured this score for 10 iterations and compared the maximum MIA score with the baseline accuracy.

As described in Table \ref{tab:NIBtest}, there were two problems, one was that the MIA was small compared to the baseline accuracy, and the other was that the MIA decreased as the iterations increased.
For the model with both LB and BNB added, the MIA scores of GxI and GxSI were decreased to 10.5\% and 12.5\%, respectively, and the maximum MIA was also lower than the baseline accuracy.
The same phenomenon was observed for the model with BNB added, and the two GBA methods showed 16.6\% and 11.5\% MIA scores at 9th iteration, respectively.
For the model with LB added, the maximum MIA of GxSI was slightly increased, but the score eventually fell to 10.0.

This result empirically shows that when the gradient calculated by the GBA methods does not satisfy the NIB condition, the performance is reduced by additive noise as we expected.

\section{Conclusion}
In this study, we proposed the attribution mask that filters out irrelevant parts of the input features and introduced it into the input of a pre-trained DNN to measure the MIA.
For improving the performance of filtering, quantified through the MIA, we also provide the NIB condition and a new attribution method, GxSI.
Through extensive experiments, we measured the performance of GBA methods. We found that the attribution methods, which satisfy the NIB condition, outperformed the methods that did not. 
Remarkably, GxSI showed superior MIA scores compared to the existing methods. We also modified each image in both training and test set with the mask obtained by this method. With this modified dataset, we trained a new DNN from scratch and achieved a classification accuracy of 100\% on the test set.
These results imply that our approach effectively removes parts of input features not related to the target label and is capable of generating easy-to-understand features from the perspective of the DNN.




\nocite{watanabe2018espnet}
\nocite{burgess2019monet}
\nocite{hinton2007learning}
\nocite{lakkaraju2020robust}

\bibliography{attribution_mask}

\begin{thebibliography}{32}
\providecommand{\natexlab}[1]{#1}
\providecommand{\url}[1]{\texttt{#1}}
\expandafter\ifx\csname urlstyle\endcsname\relax
  \providecommand{\doi}[1]{doi: #1}\else
  \providecommand{\doi}{doi: \begingroup \urlstyle{rm}\Url}\fi

\bibitem[Adebayo et~al.(2018)Adebayo, Gilmer, Muelly, Goodfellow, Hardt, and
  Kim]{adebayo2018sanity}
Adebayo, J., Gilmer, J., Muelly, M., Goodfellow, I., Hardt, M., and Kim, B.
\newblock Sanity checks for saliency maps.
\newblock \emph{Advances in Neural Information Processing Systems},
  31:\penalty0 9505--9515, 2018.

\bibitem[Ancona et~al.(2017)Ancona, Ceolini, {\"O}ztireli, and
  Gross]{ancona2017towards}
Ancona, M., Ceolini, E., {\"O}ztireli, C., and Gross, M.
\newblock Towards better understanding of gradient-based attribution methods
  for deep neural networks.
\newblock \emph{arXiv preprint arXiv:1711.06104}, 2017.

\bibitem[Burgess et~al.(2019)Burgess, Matthey, Watters, Kabra, Higgins,
  Botvinick, and Lerchner]{burgess2019monet}
Burgess, C.~P., Matthey, L., Watters, N., Kabra, R., Higgins, I., Botvinick,
  M., and Lerchner, A.
\newblock Monet: Unsupervised scene decomposition and representation.
\newblock \emph{arXiv preprint arXiv:1901.11390}, 2019.

\bibitem[Chalasani et~al.(2020)Chalasani, Chen, Chowdhury, Wu, and
  Jha]{chalasani2020concise}
Chalasani, P., Chen, J., Chowdhury, A.~R., Wu, X., and Jha, S.
\newblock Concise explanations of neural networks using adversarial training.
\newblock In \emph{International Conference on Machine Learning}, pp.\
  1383--1391. PMLR, 2020.

\bibitem[Hinton(2007)]{hinton2007learning}
Hinton, G.~E.
\newblock Learning multiple layers of representation.
\newblock \emph{Trends in cognitive sciences}, 11\penalty0 (10):\penalty0
  428--434, 2007.

\bibitem[Hooker et~al.(2018)Hooker, Erhan, Kindermans, and
  Kim]{hooker2018benchmark}
Hooker, S., Erhan, D., Kindermans, P.-J., and Kim, B.
\newblock A benchmark for interpretability methods in deep neural networks.
\newblock \emph{arXiv preprint arXiv:1806.10758}, 2018.

\bibitem[Ioffe \& Szegedy(2015)Ioffe and Szegedy]{ioffe2015batch}
Ioffe, S. and Szegedy, C.
\newblock Batch normalization: Accelerating deep network training by reducing
  internal covariate shift.
\newblock In \emph{International Conference on Machine Learning}, pp.\
  448--456. PMLR, 2015.

\bibitem[Kim et~al.(2019)Kim, Seo, Jeon, Koo, Choe, and Jeon]{kim2019saliency}
Kim, B., Seo, J., Jeon, S., Koo, J., Choe, J., and Jeon, T.
\newblock Why are saliency maps noisy? cause of and solution to noisy saliency
  maps.
\newblock In \emph{2019 IEEE/CVF International Conference on Computer Vision
  Workshop (ICCVW)}, pp.\  4149--4157. IEEE, 2019.

\bibitem[Kindermans et~al.(2016)Kindermans, Sch{\"u}tt, M{\"u}ller, and
  D{\"a}hne]{kindermans2016investigating}
Kindermans, P.-J., Sch{\"u}tt, K., M{\"u}ller, K.-R., and D{\"a}hne, S.
\newblock Investigating the influence of noise and distractors on the
  interpretation of neural networks.
\newblock \emph{arXiv preprint arXiv:1611.07270}, 2016.

\bibitem[Kindermans et~al.(2017)Kindermans, Sch{\"u}tt, Alber, M{\"u}ller,
  Erhan, Kim, and D{\"a}hne]{kindermans2017learning}
Kindermans, P.-J., Sch{\"u}tt, K.~T., Alber, M., M{\"u}ller, K.-R., Erhan, D.,
  Kim, B., and D{\"a}hne, S.
\newblock Learning how to explain neural networks: Patternnet and
  patternattribution.
\newblock \emph{arXiv preprint arXiv:1705.05598}, 2017.

\bibitem[Kokhlikyan et~al.(2020)Kokhlikyan, Miglani, Martin, Wang, Alsallakh,
  Reynolds, Melnikov, Kliushkina, Araya, Yan, et~al.]{kokhlikyan2020captum}
Kokhlikyan, N., Miglani, V., Martin, M., Wang, E., Alsallakh, B., Reynolds, J.,
  Melnikov, A., Kliushkina, N., Araya, C., Yan, S., et~al.
\newblock Captum: A unified and generic model interpretability library for
  pytorch.
\newblock \emph{arXiv preprint arXiv:2009.07896}, 2020.

\bibitem[Lakkaraju et~al.(2020)Lakkaraju, Arsov, and
  Bastani]{lakkaraju2020robust}
Lakkaraju, H., Arsov, N., and Bastani, O.
\newblock Robust and stable black box explanations.
\newblock In \emph{International Conference on Machine Learning}, pp.\
  5628--5638. PMLR, 2020.

\bibitem[Lundberg \& Lee(2017)Lundberg and Lee]{lundberg2017unified}
Lundberg, S.~M. and Lee, S.-I.
\newblock A unified approach to interpreting model predictions.
\newblock In \emph{Advances in Neural Information Processing Systems}, pp.\
  4765--4774, 2017.

\bibitem[Mohan et~al.(2019)Mohan, Kadkhodaie, Simoncelli, and
  Fernandez-Granda]{mohan2019robust}
Mohan, S., Kadkhodaie, Z., Simoncelli, E.~P., and Fernandez-Granda, C.
\newblock Robust and interpretable blind image denoising via bias-free
  convolutional neural networks.
\newblock \emph{arXiv preprint arXiv:1906.05478}, 2019.

\bibitem[Montavon et~al.(2017)Montavon, Lapuschkin, Binder, Samek, and
  M{\"u}ller]{montavon2017explaining}
Montavon, G., Lapuschkin, S., Binder, A., Samek, W., and M{\"u}ller, K.-R.
\newblock Explaining nonlinear classification decisions with deep taylor
  decomposition.
\newblock \emph{Pattern Recognition}, 65:\penalty0 211--222, 2017.

\bibitem[Nie et~al.(2018)Nie, Zhang, and Patel]{nie2018theoretical}
Nie, W., Zhang, Y., and Patel, A.
\newblock A theoretical explanation for perplexing behaviors of
  backpropagation-based visualizations.
\newblock In \emph{International Conference on Machine Learning}, pp.\
  3809--3818. PMLR, 2018.

\bibitem[Paszke et~al.(2019)Paszke, Gross, Massa, Lerer, Bradbury, Chanan,
  Killeen, Lin, Gimelshein, Antiga, et~al.]{paszke2019pytorch}
Paszke, A., Gross, S., Massa, F., Lerer, A., Bradbury, J., Chanan, G., Killeen,
  T., Lin, Z., Gimelshein, N., Antiga, L., et~al.
\newblock Pytorch: An imperative style, high-performance deep learning library.
\newblock \emph{arXiv preprint arXiv:1912.01703}, 2019.

\bibitem[Patro \& Sahu(2015)Patro and Sahu]{patro2015normalization}
Patro, S. and Sahu, K.~K.
\newblock Normalization: A preprocessing stage.
\newblock \emph{arXiv preprint arXiv:1503.06462}, 2015.

\bibitem[Phang et~al.(2020)Phang, Park, and Geras]{phang2020investigating}
Phang, J., Park, J., and Geras, K.~J.
\newblock Investigating and simplifying masking-based saliency methods for
  model interpretability.
\newblock \emph{arXiv preprint arXiv:2010.09750}, 2020.

\bibitem[Ribeiro et~al.(2016)Ribeiro, Singh, and Guestrin]{ribeiro2016should}
Ribeiro, M.~T., Singh, S., and Guestrin, C.
\newblock " why should i trust you?" explaining the predictions of any
  classifier.
\newblock In \emph{Proceedings of the 22nd ACM SIGKDD international conference
  on knowledge discovery and data mining}, pp.\  1135--1144, 2016.

\bibitem[Shapley(1953)]{shapley1953value}
Shapley, L.~S.
\newblock A value for n-person games.
\newblock \emph{Contributions to the Theory of Games}, 2\penalty0
  (28):\penalty0 307--317, 1953.

\bibitem[Shi et~al.(2020)Shi, Zhang, Dai, Zhu, Mu, and
  Wang]{shi2020informative}
Shi, B., Zhang, D., Dai, Q., Zhu, Z., Mu, Y., and Wang, J.
\newblock Informative dropout for robust representation learning: A shape-bias
  perspective.
\newblock In \emph{International Conference on Machine Learning}, pp.\
  8828--8839. PMLR, 2020.

\bibitem[Shrikumar et~al.(2017)Shrikumar, Greenside, and
  Kundaje]{shrikumar2017learning}
Shrikumar, A., Greenside, P., and Kundaje, A.
\newblock Learning important features through propagating activation
  differences.
\newblock In \emph{International Conference on Machine Learning}, pp.\
  3145--3153. PMLR, 2017.

\bibitem[Simonyan \& Zisserman(2014)Simonyan and Zisserman]{simonyan2014very}
Simonyan, K. and Zisserman, A.
\newblock Very deep convolutional networks for large-scale image recognition.
\newblock \emph{arXiv preprint arXiv:1409.1556}, 2014.

\bibitem[Simonyan et~al.(2014)Simonyan, Vedaldi, and
  Zisserman]{simonyan2014deep}
Simonyan, K., Vedaldi, A., and Zisserman, A.
\newblock Deep inside convolutional networks: Visualising image classification
  models and saliency maps.
\newblock 2014.

\bibitem[Sixt et~al.(2020)Sixt, Granz, and Landgraf]{sixt2020explanations}
Sixt, L., Granz, M., and Landgraf, T.
\newblock When explanations lie: Why many modified bp attributions fail.
\newblock In \emph{International Conference on Machine Learning}, pp.\
  9046--9057. PMLR, 2020.

\bibitem[Springenberg et~al.(2014)Springenberg, Dosovitskiy, Brox, and
  Riedmiller]{springenberg2014striving}
Springenberg, J.~T., Dosovitskiy, A., Brox, T., and Riedmiller, M.
\newblock Striving for simplicity: The all convolutional net.
\newblock \emph{arXiv preprint arXiv:1412.6806}, 2014.

\bibitem[Sundararajan \& Najmi(2020)Sundararajan and
  Najmi]{sundararajan2020many}
Sundararajan, M. and Najmi, A.
\newblock The many shapley values for model explanation.
\newblock In \emph{International Conference on Machine Learning}, pp.\
  9269--9278. PMLR, 2020.

\bibitem[Sundararajan et~al.(2017)Sundararajan, Taly, and
  Yan]{sundararajan2017axiomatic}
Sundararajan, M., Taly, A., and Yan, Q.
\newblock Axiomatic attribution for deep networks.
\newblock In \emph{International Conference on Machine Learning}, pp.\
  3319--3328. PMLR, 2017.

\bibitem[Wang et~al.(2019)Wang, Zhou, and Bilmes]{wang2019bias}
Wang, S., Zhou, T., and Bilmes, J.
\newblock Bias also matters: Bias attribution for deep neural network
  explanation.
\newblock In \emph{International Conference on Machine Learning}, pp.\
  6659--6667. PMLR, 2019.

\bibitem[Watanabe et~al.(2018)Watanabe, Hori, Karita, Hayashi, Nishitoba, Unno,
  Soplin, Heymann, Wiesner, Chen, et~al.]{watanabe2018espnet}
Watanabe, S., Hori, T., Karita, S., Hayashi, T., Nishitoba, J., Unno, Y.,
  Soplin, N. E.~Y., Heymann, J., Wiesner, M., Chen, N., et~al.
\newblock Espnet: End-to-end speech processing toolkit.
\newblock \emph{arXiv preprint arXiv:1804.00015}, 2018.

\bibitem[Zagoruyko \& Komodakis(2016)Zagoruyko and
  Komodakis]{zagoruyko2016wide}
Zagoruyko, S. and Komodakis, N.
\newblock Wide residual networks.
\newblock \emph{arXiv preprint arXiv:1605.07146}, 2016.

\end{thebibliography}
\bibliographystyle{icml2021}

\onecolumn

\appendix
\section{GBA methods that satisfy the NIB condition under the NIB model}
We show the reason why some GBA methods are satisfied with the NIB condition.
A DNN $f:R^{d}_0\to R^{d}_K$ takes the input features $\mathbf{z}_0\in R^{d}_0$ to predict the target label $k$. 
The model is used as a classifier, and the dimension of the predictions, $d_L$, is equivalent to the number of the target classes $K$.  

\textbf{Gradient\,*\,Input} takes the modified gradient function $g$ as 
\[
g(f, \mathbf{z}_0)=\nabla_{\mathbf{z}_0}f(\mathbf{z}_0).
\]
For the NIB model, $f(\mathbf{z}_0)$ is equivalent to $\nabla_{\mathbf{z}_0}f(\mathbf{z}_0)\mathbf{z}_0$ as shown in Equation 8. Here, the  baselines $\mathbf{e}_0$ are zeros and $f(\mathbf{e}_0)=0$.
Thus, GxI satisfies the NIB condition.

\textbf{DeepLIFT} computed with $f(\mathbf{e}_0)=0$ is equivalent to Gradient\,*\,Input when applied to a model with only ReLU as a nonlinear function and no additive biases (Ancona et al., 2017). 
This model is kind of the NIB model for which Gradient\,*\,Input satisfy the NIB condition.  
Thus, DeepLIFT also satisfies the condition.

\textbf{Integrated Gradients} integrate the gradient calculated with respect to the input features. This integration leads the modified gradient function $g$ as follow:
\[
g(f, \mathbf{z}_0)=\int_{\mathbf{0}}^{\mathbf{1}}\nabla_{\mathbf{z}_0}f(\mathbf{e}_0+\alpha(\mathbf{z}_0-\mathbf{e}_0))\,d\alpha
\]
For the NIB model, $f(\mathbf{e}_0)$ is zero when $\mathbf{e}_0=\mathbf{0}$ because the model does not contain any additive noise.
Integrated Gradients is designed for the \textit{Completeness}. Thus, the prediction $f(\mathbf{z}_0)$ is equivalent to the integration, and this attribution method satisfies the NIB condition.

\section{The MIA score of GBA methods according to iteration}
In this section, we describe the MIA score over the entire iteration ($i=19$).
\begin{table}[ht!]
    \caption{
    The MIA $Q$ comparison of each attribution methods for NIB VGG16 on CIFAR-10. Iteration $i$ denotes the number of applying the attribution mask to the input features.
    }
    \centering
	\resizebox{0.70\textwidth}{!}{
        \begin{tabular}{|c|c|c|c|c|c|c|c|c|}
        \hline
        \multicolumn{3}{|c|}{Dataset}                                     & \multicolumn{6}{c|}{CIFAR-10}                                                                     \\ \hline
        \multicolumn{3}{|c|}{Method}                                      & PGxI  & GBP   & GxI   & DL    & IG    & \begin{tabular}[c]{@{}c@{}}GxSI\\ (Proposed)\end{tabular} \\ \hline
        \multirow{20}{*}{Iteration} & \multicolumn{2}{c|}{$i=0$}          & \multicolumn{6}{c|}{91.50}                                                                        \\ \cline{2-9} 
                                    & $i=1$  & \multirow{19}{*}{$Q$ (\%)} & 80.60 & 85.80 & 90.20 & 90.20 & 90.20 & 94.10                                                     \\ \cline{2-2} \cline{4-9} 
                                    & $i=2$  &                            & 64.50 & 69.80 & 96.30 & 96.30 & 96.30 & 98.20                                                     \\ \cline{2-2} \cline{4-9} 
                                    & $i=3$  &                            & 49.50 & 39.10 & 93.90 & 93.90 & 93.90 & 99.30                                                     \\ \cline{2-2} \cline{4-9} 
                                    & $i=4$  &                            & 36.70 & 19.50 & 91.70 & 91.70 & 91.70 & 99.60                                                     \\ \cline{2-2} \cline{4-9} 
                                    & $i=5$  &                            & 28.70 & 13.70 & 87.00 & 87.00 & 87.10 & 99.70                                                     \\ \cline{2-2} \cline{4-9} 
                                    & $i=6$  &                            & 24.30 & 12.10 & 82.80 & 82.80 & 82.70 & 99.70                                                     \\ \cline{2-2} \cline{4-9} 
                                    & $i=7$  &                            & 21.70 & 11.70 & 76.80 & 76.80 & 76.90 & 99.80                                                     \\ \cline{2-2} \cline{4-9} 
                                    & $i=8$  &                            & 20.30 & 11.20 & 72.00 & 72.00 & 72.20 & 99.80                                                     \\ \cline{2-2} \cline{4-9} 
                                    & $i=9$  &                            & 19.40 & 11.00 & 67.50 & 67.40 & 67.60 & 99.80                                                     \\ \cline{2-2} \cline{4-9} 
                                    & $i=10$ &                            & 19.00 & 10.90 & 63.60 & 63.70 & 63.60 & 99.80                                                     \\ \cline{2-2} \cline{4-9} 
                                    & $i=11$ &                            & 18.80 & 10.90 & 60.60 & 60.60 & 60.40 & 99.90                                                     \\ \cline{2-2} \cline{4-9} 
                                    & $i=12$ &                            & 18.70 & 10.90 & 59.00 & 59.20 & 58.80 & 99.90                                                     \\ \cline{2-2} \cline{4-9} 
                                    & $i=13$ &                            & 18.50 & 11.00 & 57.40 & 57.30 & 57.60 & 99.80                                                     \\ \cline{2-2} \cline{4-9} 
                                    & $i=14$ &                            & 18.50 & 10.90 & 56.20 & 55.90 & 55.90 & 99.90                                                     \\ \cline{2-2} \cline{4-9} 
                                    & $i=15$ &                            & 18.50 & 10.90 & 54.80 & 54.60 & 54.70 & 99.90                                                     \\ \cline{2-2} \cline{4-9} 
                                    & $i=16$ &                            & 18.50 & 10.90 & 53.80 & 53.90 & 53.70 & 99.80                                                     \\ \cline{2-2} \cline{4-9} 
                                    & $i=17$ &                            & 18.60 & 10.90 & 53.00 & 53.00 & 53.20 & 99.80                                                     \\ \cline{2-2} \cline{4-9} 
                                    & $i=18$ &                            & 18.50 & 10.90 & 52.50 & 52.40 & 52.60 & 99.90                                                     \\ \cline{2-2} \cline{4-9} 
                                    & $i=19$ &                            & 18.30 & 10.90 & 51.90 & 52.00 & 51.90 & 99.80                                                     \\ \hline
        \end{tabular}
     }
\end{table}
\begin{table}[ht!]
    \caption{
    The MIA $Q$ comparison of each attribution methods for NIB VGG16 on CIFAR-100. Iteration $i$ dentoes the number of applying the attribution mask to the input features.
    }
    \centering
	\resizebox{0.70\textwidth}{!}{
        \begin{tabular}{|c|c|c|c|c|c|c|c|c|}
        \hline
        \multicolumn{3}{|c|}{Dataset}                                     & \multicolumn{6}{c|}{CIFAR-100}                                                                    \\ \hline
        \multicolumn{3}{|c|}{Method}                                      & PGxI  & GBP   & GxI   & DL    & IG    & \begin{tabular}[c]{@{}c@{}}GxSI\\ (Proposed)\end{tabular} \\ \hline
        \multirow{20}{*}{Iteration} & \multicolumn{2}{c|}{$i=0$}          & \multicolumn{6}{c|}{69.80}                                                                        \\ \cline{2-9} 
                                    & $i=1$  & \multirow{19}{*}{$Q$ (\%)} & 50.50 & 61.40 & 57.50 & 57.50 & 57.50 & 66.20                                                     \\ \cline{2-2} \cline{4-9} 
                                    & $i=2$  &                            & 30.10 & 38.20 & 64.10 & 64.10 & 64.10 & 75.90                                                     \\ \cline{2-2} \cline{4-9} 
                                    & $i=3$  &                            & 18.20 & 13.50 & 55.60 & 55.60 & 55.80 & 81.30                                                     \\ \cline{2-2} \cline{4-9} 
                                    & $i=4$  &                            & 11.60 & 0.04  & 47.30 & 47.30 & 47.40 & 83.10                                                     \\ \cline{2-2} \cline{4-9} 
                                    & $i=5$  &                            & 0.08  & 0.02  & 39.90 & 39.90 & 40.00 & 83.80                                                     \\ \cline{2-2} \cline{4-9} 
                                    & $i=6$  &                            & 0.06  & 0.01  & 33.80 & 33.90 & 34.00 & 85.20                                                     \\ \cline{2-2} \cline{4-9} 
                                    & $i=7$  &                            & 0.05  & 0.01  & 29.00 & 29.00 & 29.00 & 86.60                                                     \\ \cline{2-2} \cline{4-9} 
                                    & $i=8$  &                            & 0.04  & 0.01  & 24.90 & 25.00 & 25.10 & 87.60                                                     \\ \cline{2-2} \cline{4-9} 
                                    & $i=9$  &                            & 0.04  & 0.01  & 21.30 & 21.30 & 21.00 & 88.10                                                     \\ \cline{2-2} \cline{4-9} 
                                    & $i=10$ &                            & 0.04  & 0.01  & 18.90 & 18.90 & 18.50 & 89.00                                                     \\ \cline{2-2} \cline{4-9} 
                                    & $i=11$ &                            & 0.04  & 0.01  & 16.80 & 16.70 & 16.60 & 90.00                                                     \\ \cline{2-2} \cline{4-9} 
                                    & $i=12$ &                            & 0.04  & 0.01  & 14.60 & 14.50 & 14.70 & 90.20                                                     \\ \cline{2-2} \cline{4-9} 
                                    & $i=13$ &                            & 0.04  & 0.01  & 13.50 & 13.30 & 13.20 & 89.80                                                     \\ \cline{2-2} \cline{4-9} 
                                    & $i=14$ &                            & 0.04  & 0.01  & 12.50 & 12.30 & 12.00 & 89.80                                                     \\ \cline{2-2} \cline{4-9} 
                                    & $i=15$ &                            & 0.04  & 0.01  & 11.50 & 11.40 & 11.40 & 90.00                                                     \\ \cline{2-2} \cline{4-9} 
                                    & $i=16$ &                            & 0.03  & 0.01  & 10.90 & 11.00 & 10.50 & 89.70                                                     \\ \cline{2-2} \cline{4-9} 
                                    & $i=17$ &                            & 0.04  & 0.01  & 10.10 & 10.20 & 10.40 & 89.70                                                     \\ \cline{2-2} \cline{4-9} 
                                    & $i=18$ &                            & 0.04  & 0.01  & 0.10  & 0.10  & 0.10  & 89.60                                                     \\ \cline{2-2} \cline{4-9} 
                                    & $i=19$ &                            & 0.04  & 0.01  & 0.10  & 0.10  & 0.10  & 89.70                                                     \\ \hline
        \end{tabular}
     }
\end{table}

\section{The MIA score of GxSI methods according to model configurations.}

In this section, we describe the MIA score over the entire iteration ($i=9$) according to model configurations.

\begin{table}[ht!]
    \caption{
    The MIA of GxSI comparison of each model architecture on CIFAR-10. ``\# of layer" denotes the number of convolutional layers and linear layers. Iteration $i$ denotes the number of applying the attribution mask to the input features.
    }
    \centering
	\resizebox{1.0\textwidth}{!}{
        \begin{tabular}{|c|c|c|c|c|c|c|c|}
        \hline
        \multicolumn{3}{|c|}{Name}                                      & NIB VGG13-6 & NIB VGG13-4 & NIB VGG13-2 & NIB VGG13 & NIB VGG16 \\ \hline
        \multicolumn{3}{|c|}{\# of layer}                               & 7           & 9           & 11          & 13        & 16        \\ \hline
        \multirow{10}{*}{Iteration} & \multicolumn{2}{c|}{$i=0$}        & 83.3        & 89.0        & 90.9        & 91.4      & 91.5      \\ \cline{2-8} 
                                    & $i=1$ & \multirow{9}{*}{$Q (\%)$} & 98.4        & 97.5        & 97.2        & 96.5      & 94.1      \\ \cline{2-2} \cline{4-8} 
                                    & $i=2$ &                           & 99.9        & 99.7        & 99.6        & 99.1      & 98.2      \\ \cline{2-2} \cline{4-8} 
                                    & $i=3$ &                           & 100.0       & 100.0       & 99.9        & 99.8      & 99.3      \\ \cline{2-2} \cline{4-8} 
                                    & $i=4$ &                           & 100.0       & 100.0       & 100.0       & 99.9      & 99.6      \\ \cline{2-2} \cline{4-8} 
                                    & $i=5$ &                           & 99.9        & 100.0       & 100.0       & 99.9      & 99.7      \\ \cline{2-2} \cline{4-8} 
                                    & $i=6$ &                           & 99.7        & 100.0       & 100.0       & 100.0     & 99.7      \\ \cline{2-2} \cline{4-8} 
                                    & $i=7$ &                           & 99.4        & 100.0       & 100.0       & 99.9      & 99.8      \\ \cline{2-2} \cline{4-8} 
                                    & $i=8$ &                           & 98.9        & 100.0       & 100.0       & 99.9      & 99.8      \\ \cline{2-2} \cline{4-8} 
                                    & $i=9$ &                           & 98.3        & 100.0       & 100.0       & 99.9      & 99.8      \\ \hline
        \end{tabular}
     }
\end{table}

\section{Additional visualization of masked input features}
We provide representations of the masked input features with each attribution method for NIB VGG16. The features are sampled on test set of CIFAR-10 .  

\begin{figure}[ht!]
	\centering
	\includegraphics[scale=0.3]{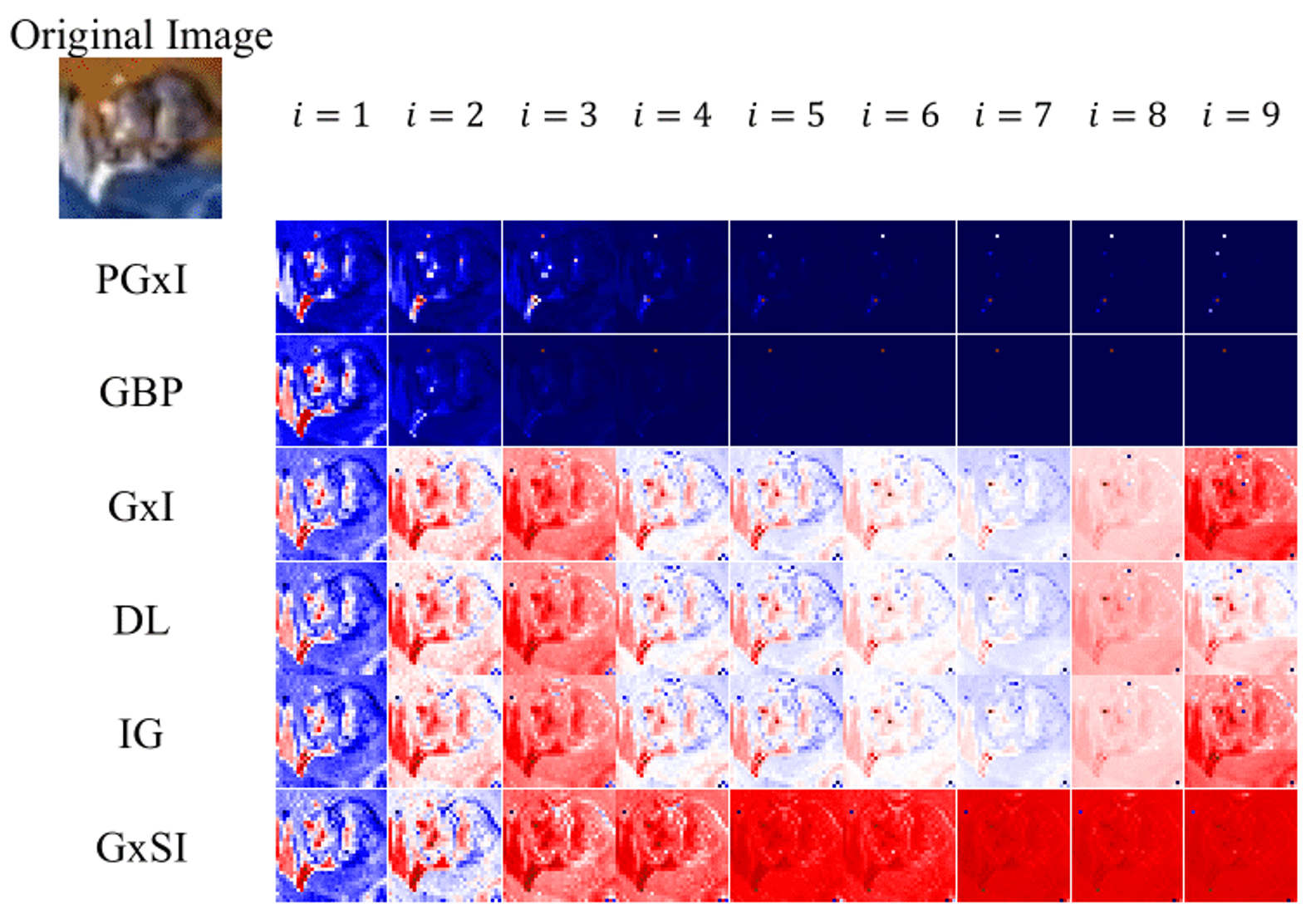}
	\caption{
	The features are masked for cat example and are visualized by summing the channel. Red color denotes high value and blue color is low value.
	}
\end{figure}
\begin{figure}[ht!]
	\centering
	\includegraphics[scale=0.3]{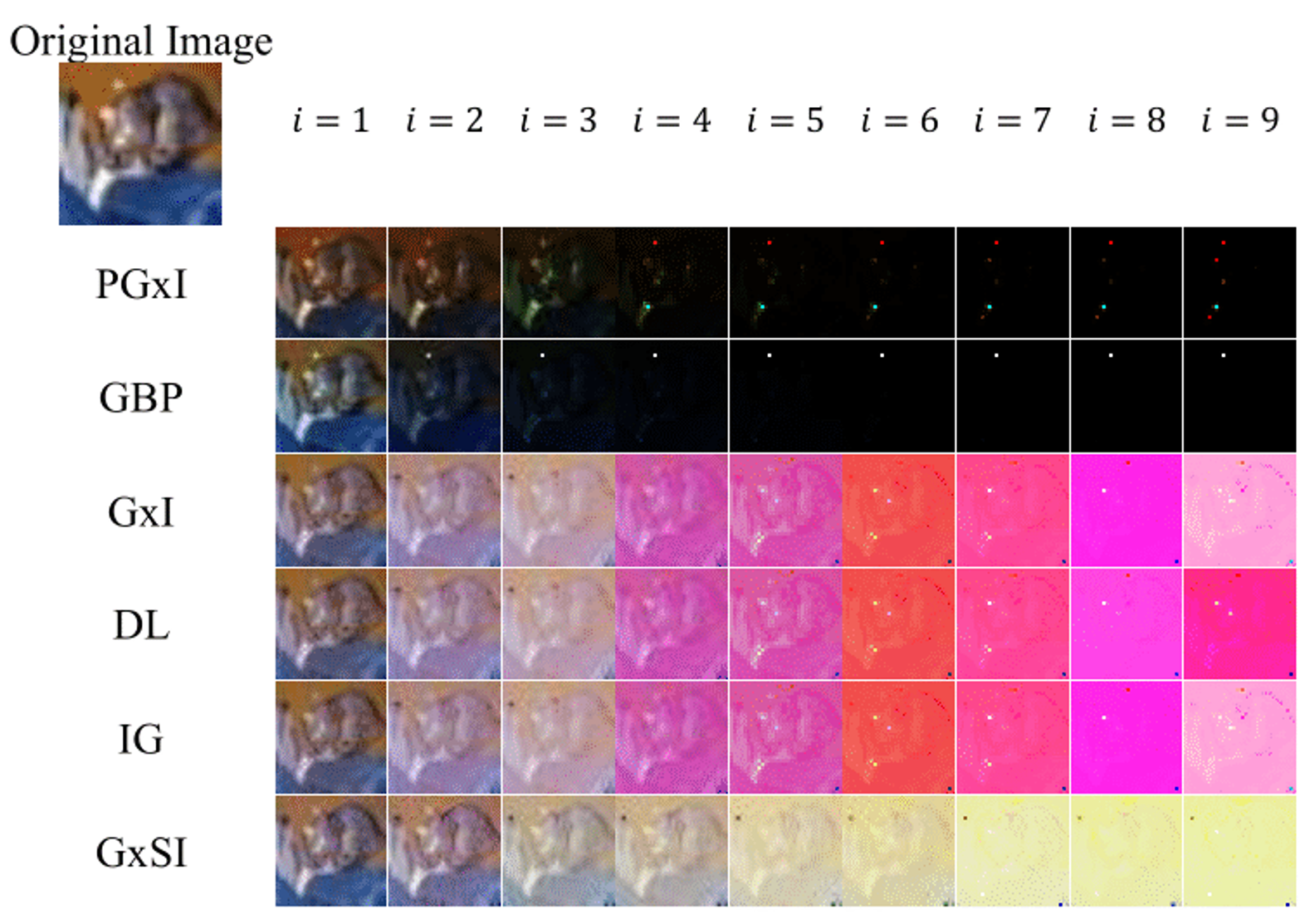}
	\caption{
	The features are masked for cat example and are visualized with each RGB channel. 
	}
\end{figure}
\begin{figure}
	\centering
	\includegraphics[scale=0.3]{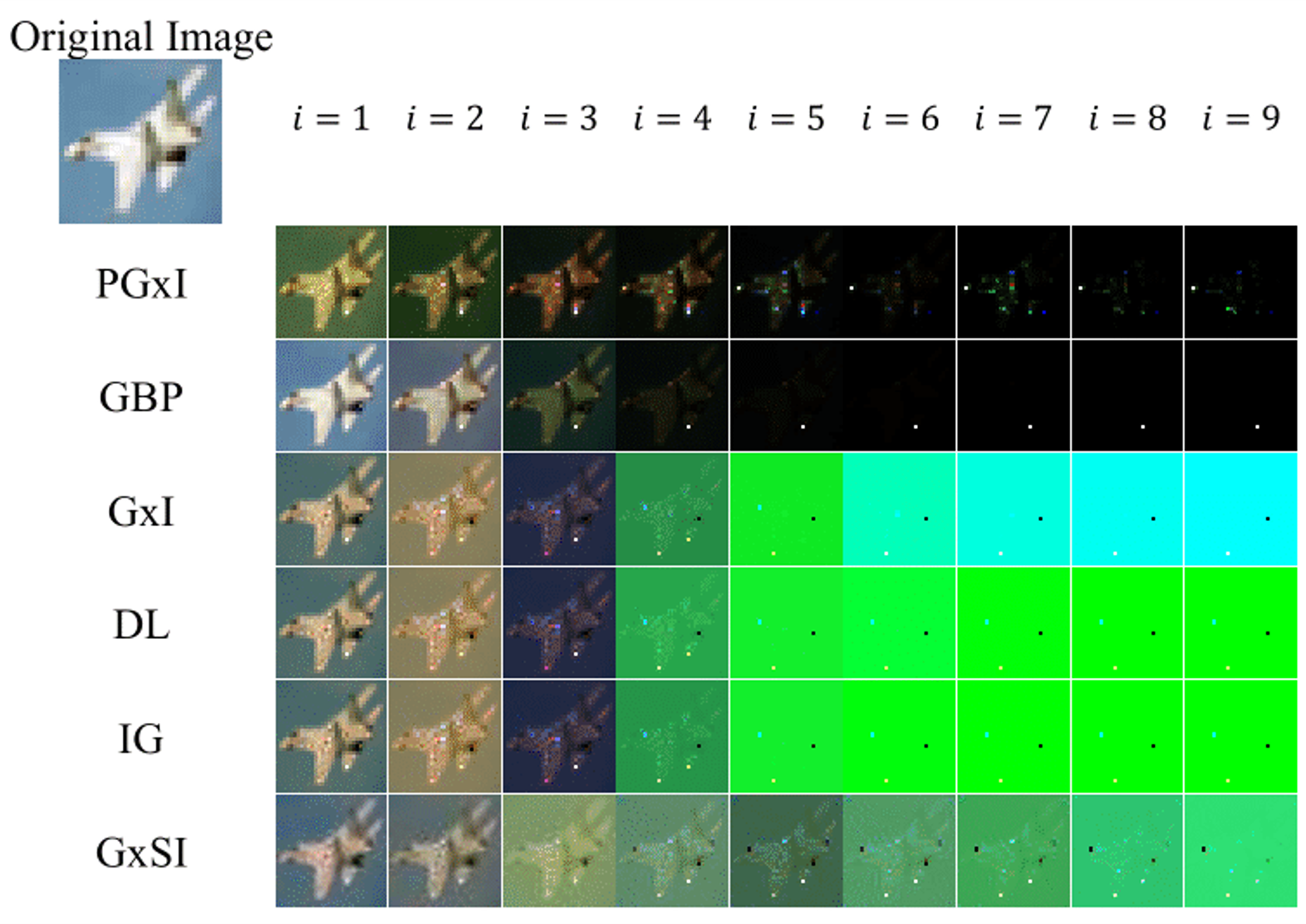}
	\caption{The features are masked for airplane example and are visualized with each RGB channel.}
\end{figure}
\begin{figure}
	\centering
	\includegraphics[scale=0.3]{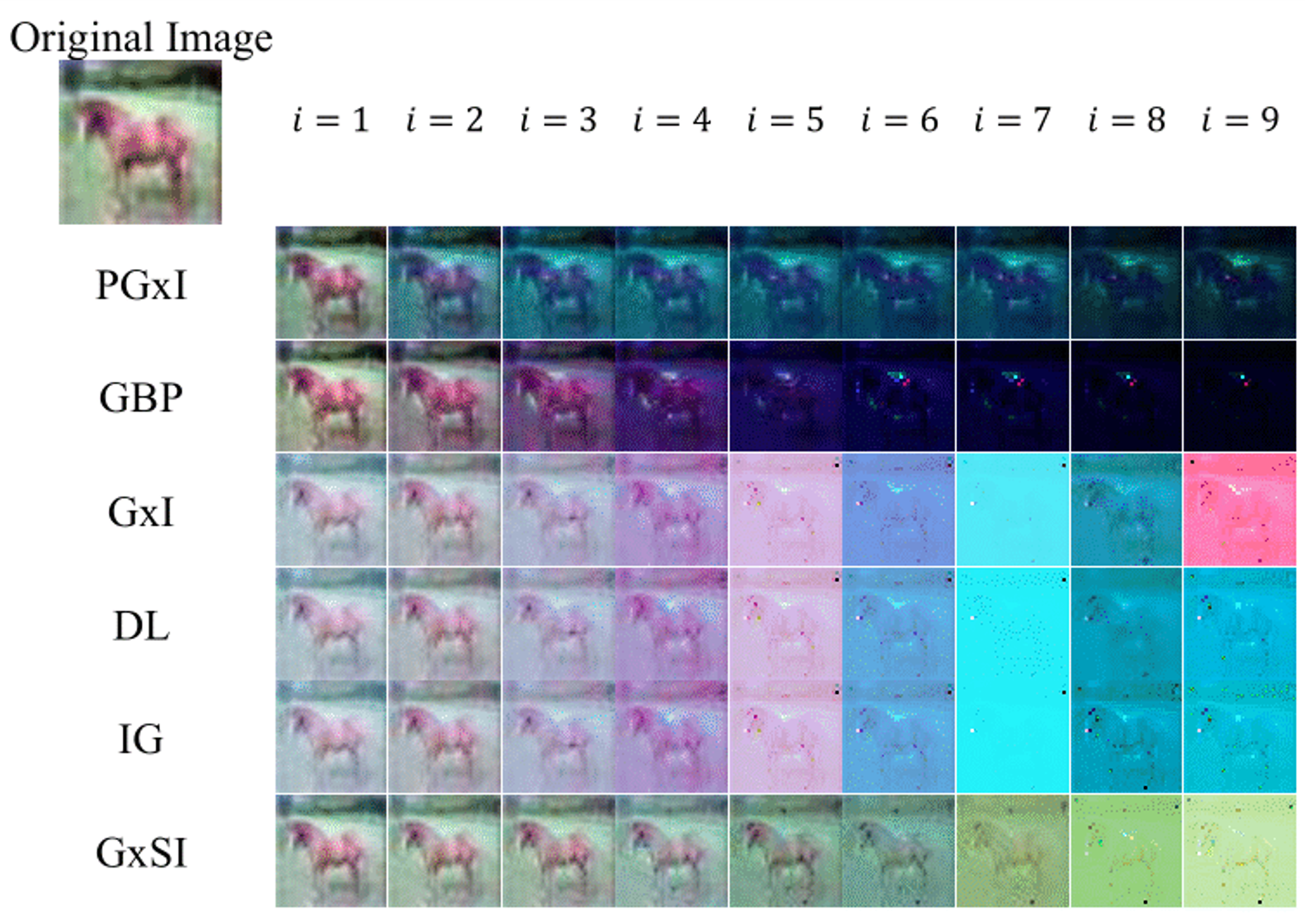}
	\caption{The features are masked for horse example and are visualized with each RGB channel.}
\end{figure}
\begin{figure}
	\centering
	\includegraphics[scale=0.3]{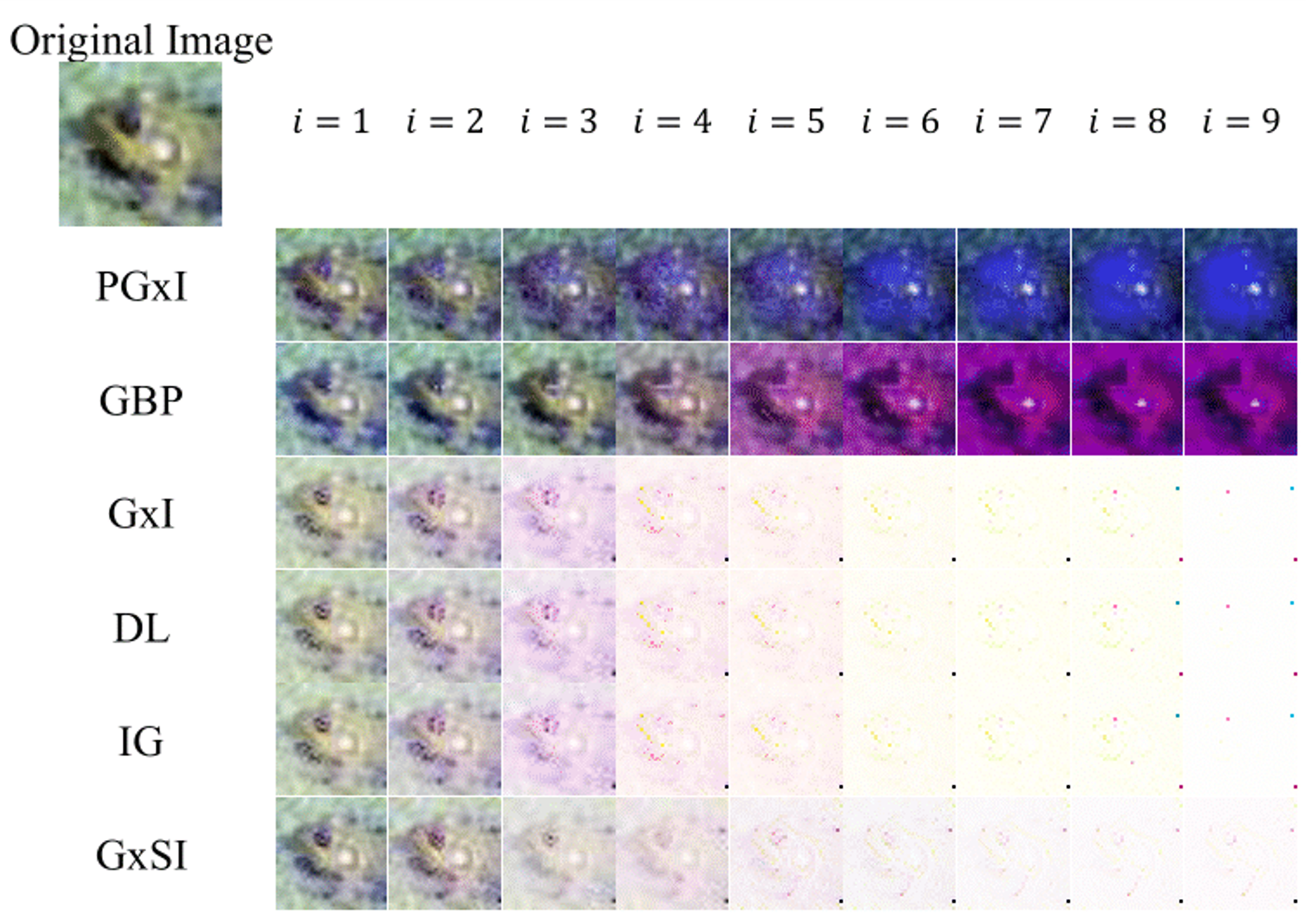}
	\caption{The features are masked for flog example and are visualized with each RGB channel.}
\end{figure}
\begin{figure}
	\centering
	\includegraphics[scale=0.3]{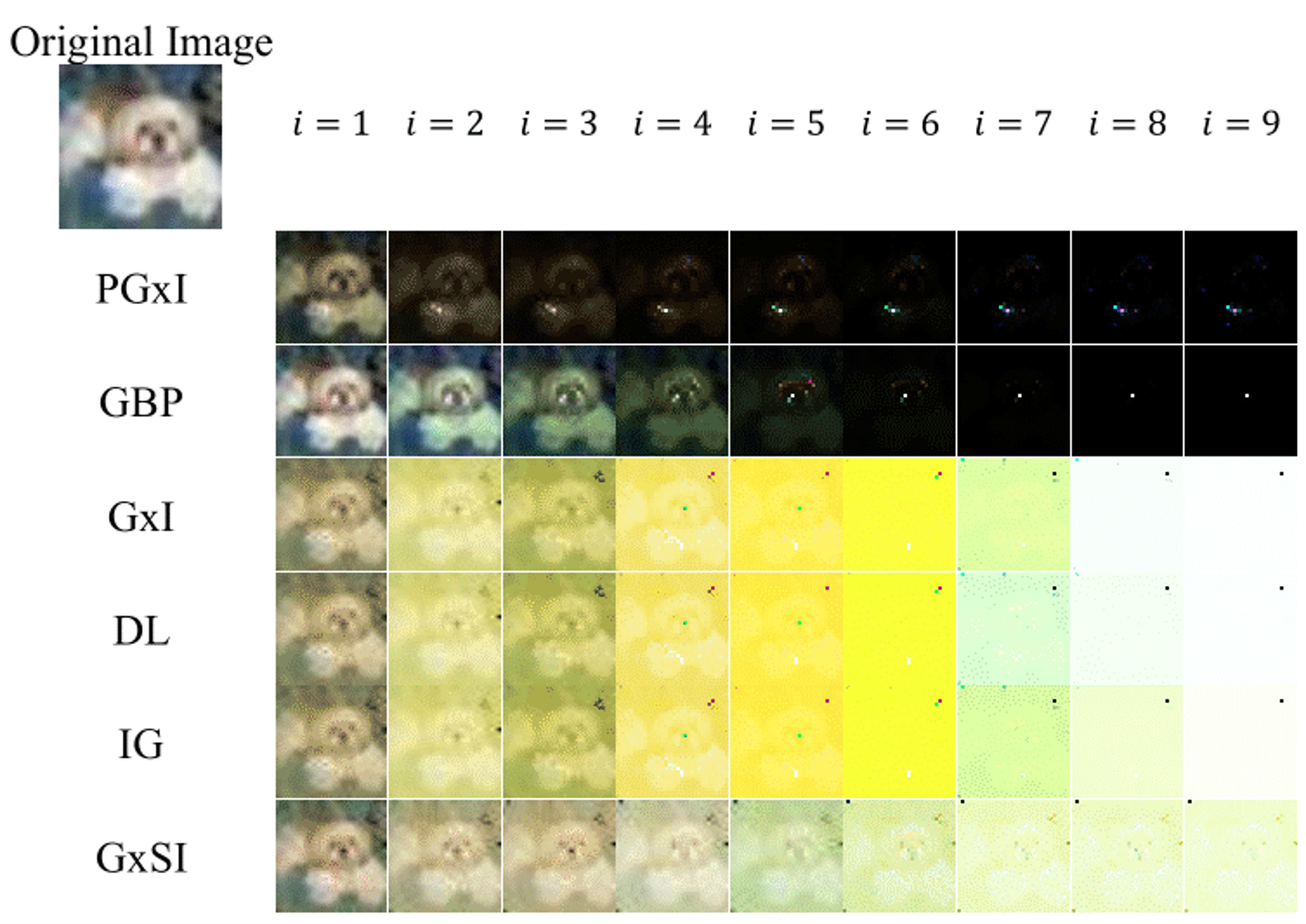}
	\caption{The features are masked for dog example and are visualized with each RGB channel.}
\end{figure}

\end{document}